\documentclass[a4paper,fleqn]{cas-sc}

\usepackage[authoryear]{natbib}
\usepackage{amsthm}
\usepackage{subfigure}
\newtheorem{theorem}{Theorem}
 
\newtheorem{example}{Example}
\newtheorem{definition}{Definition}
\newtheorem{lemma}{Lemma} 
\newtheorem{remark}{Remark} 
\usepackage[normalem]{ulem}
\def\tsc#1{\csdef{#1}{\textsc{\lowercase{#1}}\xspace}}
\tsc{WGM}
\tsc{QE}
\tsc{EP}
\tsc{PMS}
\tsc{BEC}
\tsc{DE}

\usepackage{hyperref}
\hypersetup{
    colorlinks=true,
    linkcolor=blue,
    filecolor=magenta,      
    urlcolor=blue,
    citecolor=blue
}

\begin{document}
\let\WriteBookmarks\relax
\def\floatpagepagefraction{1}
\def\textpagefraction{.001}
\shorttitle{Reduced-Order Dynamics with Singularity via Latent-Augmented Neural ODEs}


\title [mode = title]{Learning Reduced-Order Dynamics with Singularity via Latent-Augmented Neural Ordinary Differential Equations}

\author[1]{Xiaorui Wang}
\fnmark[1]
\ead{242081101003@lut.edu.cn}
\credit{Methodology, Writing – original draft, Software, Validation, Visualization}
\affiliation[1]{organization={Lanzhou University of Technology},
            addressline={School of Automation and Electrical Engineering}, 
            city={Lanzhou},
            postcode={730050}, 
            state={Gansu},
            country={China}}

\author[2]{Yu Zhou}
\fnmark[1]
\ead{yuzhou@hiroshima-u.ac.jp}
\credit{Conceptualization, Methodology, Writing – original draft, Investigation, Supervision}
\affiliation[2]{organization={Hiroshima University},
            addressline={Graduate School of Advanced Science and Engineering}, 
            city={Higashihiroshima},
            postcode={739-0046}, 
            state={Hiroshima},
            country={Japan}}

\author[3]{Wenjie Mei}
\cormark[1]
\ead{mei.wenjie@nju.edu.cn}
\credit{Conceptualization, Methodology, Funding acquisition, Writing – review and editing, Supervision, Project administration}
\affiliation[3]{organization={Nanjing University - Suzhou Campus},
            addressline={School of Robotics and Automation}, 
            city={Suzhou},
            postcode={215163}, 
            state={Jiangsu},
            country={China}}

\author[4]{Dongzhe Zheng}
\ead{dz5992@princeton.edu}
\credit{Resources, Writing – review and editing}
\affiliation[4]{organization={Princeton University},
            addressline={Department of Mechanical and Aerospace Engineering}, 
            city={Princeton},
            postcode={08544}, 
            state={New Jersey},
            country={United States}}

\author[2]{Yang Bai}
\ead{yangbai@hiroshima-u.ac.jp}
\credit{Resources, Writing – review and editing}

\author[2]{Masaaki Nagahara}
\ead{nagam@hiroshima-u.ac.jp}
\credit{Resources, Writing – review and editing}

\cortext[1]{Corresponding author}

\fntext[1]{These authors contributed equally to this work.}


\begin{abstract}
This paper addresses the issue of self-intersecting trajectories (in phase space) in industrial reduced-order modeling and proposes the Latent-Augmented Neural Ordinary Differential Equations (LA-NODEs) framework. From the perspective of artificial intelligence, the proposed method augments conventional neural ordinary differential equations to enhance model expressiveness, enabling the representation of conflicting vector fields that may arise in reduced-order systems, thereby improving learning accuracy. Through theoretical analysis, the underlying mechanism of the framework is established, and a condition for determining the minimum required augmentation dimension is derived. From the perspective of engineering applications, the effectiveness of the proposed method is validated on the reduced-order system of two representative industrial models, namely an interior permanent magnet synchronous motor (IPMSM) drive and a distributed energy system (DES). Experimental results demonstrate that the proposed method can recover system features that are difficult to capture using conventional approaches and achieve superior performance in terms of prediction accuracy and modeling fidelity, thereby providing an effective approach for high-precision data-driven modeling of complex industrial systems.

\end{abstract}

\begin{keywords}
Reduced-order system\sep Neural ordinary differential equations\sep Singularity\sep  Industrial system\sep Augmentation
\end{keywords}

\maketitle

\section{Introduction}
In practical industrial processes, high-dimensional, strongly nonlinear, and computationally expensive models are often intractable for real-time optimization and model predictive control tasks \cite{industry1,industry2}. Therefore, model order reduction is commonly employed to derive low-dimensional and computationally efficient surrogate models \cite{MOR1, MOR2}. This type of reduced-order system modeling problem is typically addressed using data-driven approaches to construct surrogate dynamical models.

Data-driven system modeling is becoming significant in modern engineering, with critical applications spanning autonomous robotics to complex energy networks, for example \cite{DATA1, DATA2,DATA3}. As universal function approximators, neural networks (NNs) offer the capacity to learn intricate nonlinear mappings directly from raw data, leading to their widespread application as powerful tools for system identification \cite{pillonetto2025deep, TII_NN, NN4,shen2026improved}. In contrast to traditional modeling based on the discrete approximation of physical laws, continuous-time modeling possibly provides a more realistic description of a system's evolution, offering superior fidelity and temporal consistency \cite{continuous_system1, continuous_system2}. An important work in this domain was the introduction of Neural Ordinary Differential Equations (NODEs) in \cite{NODEs}, which interprets Residual Networks (ResNets) \cite{RESNET} as the discretization of continuous-time dynamical systems. This paradigm shift has enabled the smooth integration of deep learning with the rich theoretical framework of differential equations. The inherent extensibility of NODEs has promoted extensive research, resulting in diverse architectures \cite{ConvNODE, GNODE,odetrans}. Apart from these foundations, recent work in \cite{CSODE} introduced ControlSynth NODEs (CSODEs), which enhance the representational capacity of standard NODEs and demonstrate improved performance in modeling complex physical systems. 

Existing studies have shown that data-driven approaches can construct equivalent dynamical models using only limited observable data, without requiring full access to the system states \cite{REMARK,RE2}. However, such reduction procedures, while improving computational efficiency, often come at the cost of sacrificing the geometric structure, dynamical properties, and numerical robustness of the original model. Specifically, when the high-dimensional system dynamics are projected onto a low-dimensional observation space, non-injective mappings may arise, where multiple latent states correspond to the same observation. At the trajectory level, this manifests as self-intersections. Such phenomena can undermine the physical consistency and interpretability of the resulting model.

Although many NODE variants improve performance, they mainly enhance input dimensionality, with little attention to output dimensionality. This introduces a potential limitation: Almost all NODE variants rely on numerical ODE solvers to integrate dynamics, and they inherently assume that the vector field is a single-valued function of the state. Consequently, neither NODEs nor CSODEs can accurately capture system trajectories exhibiting self-intersection: scenarios where a single state point admits multiple distinct directions of motion, which is defined as "singularity" in this work. While introducing explicit time dependence can theoretically disambiguate these directions, it requires the model to learn a mapping with extreme temporal precision, which is often computationally intractable. Furthermore, in many industrial applications, the latent variables responsible for singularity are not solely time-dependent; thus, treating time as the only additional input is insufficient to resolve such topological ambiguities.

To address these challenges, the Augmented NODEs (ANODEs) framework was proposed in \cite{ANODE, ANODE2}. Nevertheless, ANODEs were primarily designed for classification tasks, and the existing literature lacks a comprehensive theoretical analysis regarding the necessity and selection of the augmented dimension for dynamical system modeling. 
Motivated by these limitations, this paper proposes the Latent-Augmented Neural Ordinary Differential Equations (LA-NODEs) framework. By augmenting dimensions, LA-NODEs resolve vector field ambiguities at singularities, thereby enabling the framework to recover the true system dynamics. We establish a formal connection between the required augmented dimension and the geometric properties of trajectories with at least one singularity. 

\textbf{The main contributions of this paper} are summarized as follows:
\begin{itemize}
\item 
We demonstrate that, when learning reduced-order dynamical systems using the classical NODE framework, the approximation error cannot be made arbitrarily small in the presence of singularities in the observed data. In particular, we show that autonomous NODEs inevitably incur a nonzero training error due to the multi-valued nature of the underlying dynamics at singular points, leading to a fundamental learning limitation.

\item 
We propose a Latent-Augmented Neural ODE (LA-NODE) framework that introduces learnable latent dynamics to lift the system into a higher-dimensional space. This augmentation resolves the multi-valuedness induced by singularities and, in theory, enables the model to achieve zero training error, thereby overcoming the fundamental limitations of autonomous NODEs.

\item 
For practical training, we investigate how to select the minimal latent dimension required to resolve singularities. We show that the effectiveness of LA-NODE depends on whether the augmented space has sufficient degrees of freedom to represent the local velocity structure at singular points. Specifically, we derive a lower bound on the required augmentation dimension based on the rank of the velocity set, which characterizes the intrinsic complexity of the singularity. This result provides a principled guideline for augmented dimension selection, revealing a trade-off between model expressivity and computational efficiency.

\end{itemize}

The remainder of this paper is organized as follows. Section \ref{sec2} presents the main problems to be addressed in this paper. Section \ref{sec3} formalizes the challenge of singularities and identifies the theoretical constraints that preclude these architectures from modeling trajectories exhibiting such phenomena. It also clarifies the fundamental principles of the dimension augmentation method. Section \ref{sec4} details the proposed LA-NODEs framework and presents the related theoretical derivations. Sections \ref{sec5} and \ref{sec6} evaluate the performance of the framework and the validity of the theoretical findings through industrial case studies and sensitivity analyses of the augmented dimension, respectively. Finally, Section \ref{sec7} concludes the study and gives potential directions for future research.

\section{Problem Formulation}\label{sec2}
Neural Ordinary Differential Equations (NODEs) provide a  framework for approximating continuous-time dynamical systems from observed data. Consider a true system governed by
\begin{equation}\label{autody}
\dot{x}(t) = F(x(t)),
\end{equation}
where $x(t) \in \mathbb{R}^n$ is the state vector and $F\colon \mathbb{R}^n  \to \mathbb{R}^n$ represents the unknown vector field. NODEs parameterize the vector field using a neural network $f_{\theta}$ with parameters $\theta$:
\begin{equation} \label{eq:neural_dynamic}\dot{\hat{x}}(t) = f_{\theta}(\hat{x}(t), t).
\end{equation}

In many practical scenarios, the objective is to learn autonomous dynamics, such as~\eqref{autody}. In such cases, the time-dependent driving terms in~\eqref{eq:neural_dynamic} are absent, and the model reduces to the autonomous form:
\begin{equation}\label{EQAUTONODE}
\dot{\hat{x}} = f_\theta(\hat{x}).
\end{equation}

Moreover, in practice, the full system state is often unobservable, and only partial or low-dimensional measurements can be obtained. Consequently, the high-dimensional state $x$ in~\eqref{autody} is mapped into a lower-dimensional observation space through
$
y=h(x),
$
where $h \colon \mathbb R^n\rightarrow\mathbb R^p$ with $p\le n$. The corresponding observation dynamics satisfy
\begin{equation}\label{EQAUTOANODE}
\dot y
=
\frac{\partial h(x)}{\partial x}F(x) := J(x) F(x),
\end{equation} 
where $y\in\mathbb R^p$ denotes the observed state.

In this case, the dynamics of the reduced-order system are simplified, possibly losing some of the information from the original system. Specifically, multiple internal states may correspond to the same observed data, but they may exhibit different dynamics or velocities. In the case of reduced-order systems, their dynamics need to be learned from the observed data. However, NODE fails to fully capture the intrinsic dynamics of the reduced-order system, making it difficult to recover the complete dynamic behavior of the system \cite{iccar}.

The issue of learning the dynamics of reduced-order systems highlights the limitations of, for example, the NODE framework in capturing the system's underlying dynamics, which has motivated the detailed investigation presented in this paper. 

\section{Learning Limitations of Autonomous NODEs}
\label{sec3}
This section formally defines the problem above and presents a lemma, establishing that autonomous NODEs~\eqref{EQAUTONODE} are fundamentally unable to represent such data consistently, and must meet a nonzero training error. These constitute part of the contributions of this work.

\subsection{Formal Description of Problem}
Consider the true system~\eqref{autody}. In reduced-order modeling, the system is often described using a lower-dimensional representation via a mapping $h \colon \mathbb{R}^n \to \mathbb{R}^p$ (where $p \leq n$), yielding the observed state $y(t) \in \mathbb{R}^p$:
$$y = h(x).$$
Although the trajectory in the phase space $\mathbb{R}^n$ is unique and non-intersecting, its projection on $\mathbb{R}^p$ may exhibit trajectory intersections. 


This phenomenon is determined jointly by the observation mapping \(h\) and the underlying dynamics \(F\). The observation map may exhibit local rank degeneracy, characterized, for \(p\le n\), by
\[
\operatorname{rank}(J(x))<p,
\]
or may be non-injective, so that distinct internal states can share the same observation: 
\[
h(x_a)=h(x_b),\qquad x_a\neq x_b.
\]
More generally, even when \(J(x)\) has full row rank, such ambiguity may arise whenever the observation dimension is lower than the state dimension. A closed reduced-order dynamics in the observation space requires the projected vector field \(J(x)F(x)\) to be constant along each fiber of \(h\). If, for some \(x_a\neq x_b\),
\[
h(x_a)=h(x_b),
\qquad
J(x_a)F(x_a)\neq J(x_b)F(x_b),
\]
then the same observation is associated with different instantaneous velocities, and hence no single-valued autonomous dynamics \(\dot y=f(y)\) can represent the observation-space evolution.

\begin{definition}\label{d1}
A reduced-order system trajectory $y(t)=h(x(t))$ possesses a \textbf{singularity} at $y^\star\in\mathbb{R}^p$ if  there exist distinct internal states $x_a, x_b \in \mathbb{R}^n$ ($x_a \neq x_b$) such that $h(x_a) = h(x_b) = y^\star$, while the induced velocities in the observation space are distinct:
$$
J(x_a)F(x_a) \neq J(x_b)F(x_b).
$$
 \end{definition}
At a singularity $y^\star$, there exist distinct times $t_1 \neq t_2$ such that $y(t_1) = y(t_2)$, yet the observed velocities differ: $\dot{y}(t_1) \neq \dot{y}(t_2)$. Note that, under the same initial condition $x_0$, if $t_1 = t_2$, one must have $\dot{y}(t_1) =\dot{y}(t_2)$ due to the inherent property of true system.
Therefore, theoretically, such reduced-order dynamics cannot be represented by a well-defined autonomous neural network since no unique velocity can be assigned to the observation point $y^\star$.

To verify this, we consider directly using a NODE with parameters $\omega$ to approximate the trajectory of $y$ without access to the internal dynamics. Its formulation is given by
\[
\dot{\hat{y}} = \hat{f}_{\omega}(\hat{y}).
\]

When applying such a NODE-based framework, which focuses on learning bijective behaviors of the mapping from state to velocity, to learn reduced-order systems with singularities, the non-injectivity property hinders the correct learning of the system's dynamics. This issue is illustrated through the following example:

\begin{example}\label{ex1}
Consider a three-dimensional dynamical system
\begin{equation}
\begin{cases}
\dot{x}_1 = x_2, \\
\dot{x}_2 = -x_1 + 0.3 x_3, \\
\dot{x}_3 = 1,
\end{cases}
\end{equation}
where a reduced-order representation is given by the observation $y = \begin{bmatrix} x_1, x_2 \end{bmatrix}^\top$. Discarding $x_3$ makes $h$ non-injective. 

As illustrated in Fig.~\ref{fig.ex}, the NODE variant: CSODE \cite{CSODE} learns a trajectory that deviates from the true trajectory. The reduced-order system's trajectory forms a spiral in three-dimensional space, while after reduction, it is represented by the black-colored trajectory in the figure, which introduces a singularity, hindering CSODE's ability to capture the system's intrinsic dynamics accurately.

\end{example}

\begin{figure}
    \centering    
\includegraphics[width=0.5\textwidth]{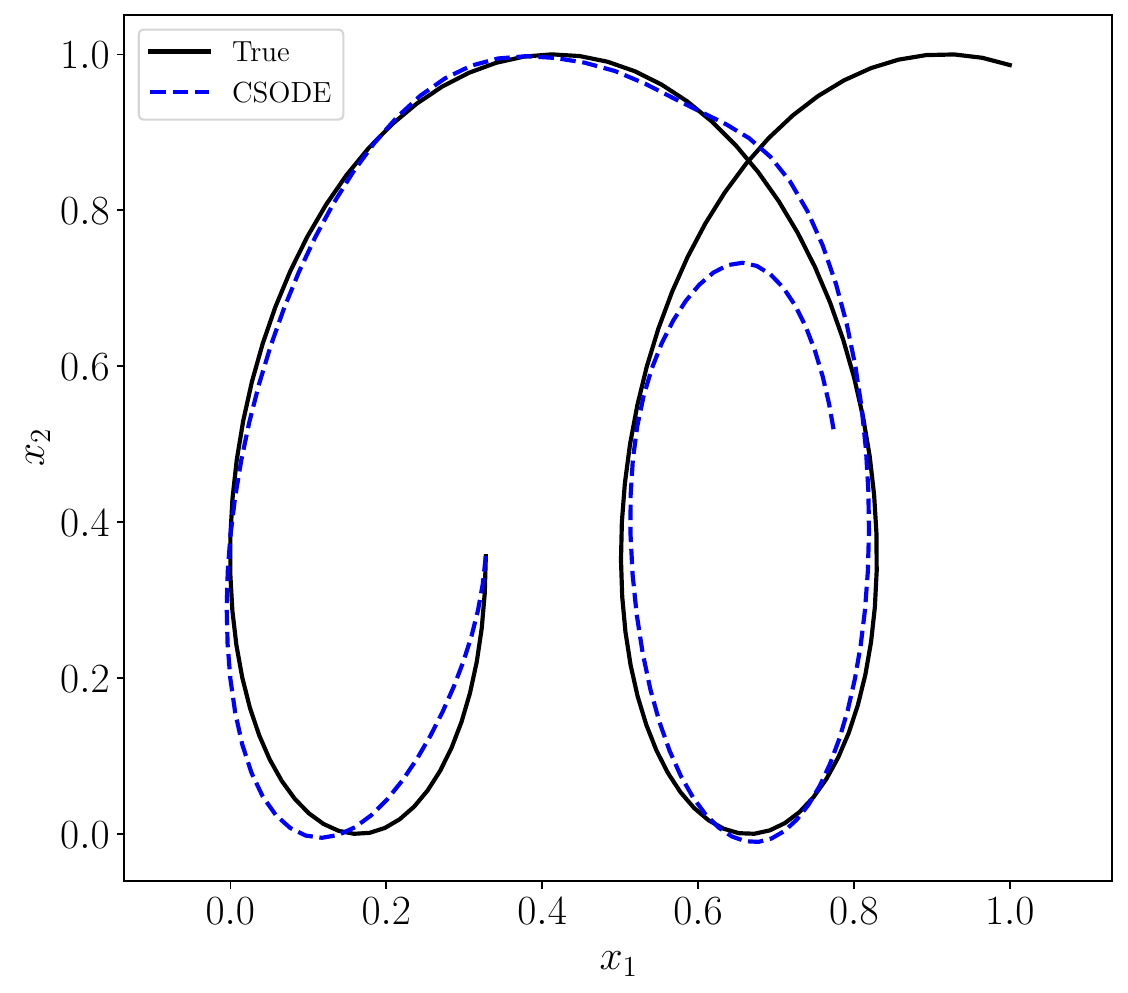}
    \caption{Comparison of learned and true trajectories for example system.}\label{fig.ex}
\end{figure}

It can be observed that when model order reduction induces singularities, NODE variants that do not account for the internal dynamics cannot accurately capture the underlying system dynamics. The negative influence of singularity on the learning performance of NODEs is presented below.

\subsection{Training Limitation Resulting from Singularity}

For a straightforward autonomous NODE that does not consider internal dynamics, we first present a lemma that characterizes the impact of singularities on its training and variants. Note that, for brevity, in the following, we use $y$ rather than $\hat{y}$ to represent the state of neural networks. 

\begin{lemma} \label{lemma_training_nozero}
    Let $\mathcal{D} = \{(y_i, v_i)\}_{i=1}^N$ be a dataset of $N$ state-velocity observations, where $v_i$ denotes the approximation of $\dot{y}$ at the instant $i$. Let $\mathcal{X} = \{u_1, u_2, \dots, u_M\}$ be the set of unique states in $\mathcal{D}$, and $\mathcal{I}_j = \{i \mid y_i = u_j \}$ (here $j \in \{1, \dots, M\}$) be the nonempty set of indices where the states are identical. For any autonomous NODE $\dot{y}= f_\omega(y)$ with $f_\omega \colon \mathbb{R}^p \to \mathbb{R}^p$, the mean squared error (MSE) for training is lower bounded by
    $$
    \mathcal{L}(\theta) := \frac{1}{N} \sum_{i=1}^N \| f_\omega(y_i) - v_i \|^2 \geq \frac{1}{N} \sum_{j=1}^M \sum_{i \in \mathcal{I}_j} \| \bar{v}_j - v_i \|^2, 
    $$
    where $\bar{v}_j = \frac{1}{|\mathcal{I}_j|} \sum_{i \in \mathcal{I}_j} v_i$ is the average velocity for state $u_j$.
\end{lemma}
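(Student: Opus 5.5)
The plan is to reduce the bound to an elementary least-squares fact. Because $f_\omega$ is a single-valued function of the state, it can only output one vector at each distinct state $u_j$.

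First, regroup the loss. The index sets $\mathcal{I}_1,\dots,\mathcal{I}_M$ partition $\{1,\dots,N\}$, since every $y_i$ equals exactly one $u_j$. So we can write
\[
\mathcal{L} = \frac{1}{N}\sum_{j=1}^M \sum_{i\in\mathcal{I}_j} \| f_\omega(u_j) - v_i\|^2 .
\]
Inside the $j$-th group, $f_\omega(y_i)=f_\omega(u_j)=:c_j$ is one fixed vector in $\mathbb{R}^p$, whatever the parameters $\omega$ are. This is the only place where autonomy (no time input, no hidden state) is used, and it is the conceptual heart of the argument.

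Second, minimize each group separately over an arbitrary vector $c\in\mathbb{R}^p$. The plan is the bias--variance identity
\[
\sum_{i\in\mathcal{I}_j}\|c-v_i\|^2 = |\mathcal{I}_j|\,\|c-\bar v_j\|^2 + \sum_{i\in\mathcal{I}_j}\|\bar v_j - v_i\|^2 .
\]
To get it, write $c-v_i=(c-\bar v_j)+(\bar v_j-v_i)$ and expand the square. The cross term is $2\langle c-\bar v_j,\sum_{i\in\mathcal{I}_j}(\bar v_j-v_i)\rangle$, and it vanishes because $\bar v_j$ is the mean of the $v_i$ over $\mathcal{I}_j$. Since the first term on the right is nonnegative, each group sum is at least $\sum_{i\in\mathcal{I}_j}\|\bar v_j-v_i\|^2$, with equality exactly when $c_j=\bar v_j$.

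Finally, set $c=c_j$ in each group, sum over $j$, and divide by $N$. This gives the stated lower bound, which holds uniformly in $\omega$. In fact the bound is the infimum over all functions $\mathbb{R}^p\to\mathbb{R}^p$, and a network interpolating $\bar v_j$ at each $u_j$ attains it.

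There is no real obstacle here. The one step needing care is the regrouping: it must be explicit that the $\mathcal{I}_j$ are disjoint and cover all indices, and that $f_\omega$ depends only on $y$. It is also worth remarking that the bound is strictly positive precisely when some $\mathcal{I}_j$ contains two distinct velocities, i.e.\ when the data exhibit a singularity in the sense of Definition~\ref{d1}.
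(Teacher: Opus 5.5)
Your proof is correct and follows essentially the same argument as the paper: regroup the loss over the partition $\{\mathcal{I}_j\}$, expand each group sum around $\bar v_j$ so the cross term vanishes, and drop the nonnegative term $|\mathcal{I}_j|\,\|f_\omega(u_j)-\bar v_j\|^2$. Your extra remarks are accurate, and you state the equality case ($f_\omega(u_j)=\bar v_j$) and the strict-positivity condition (some group containing two distinct velocities) more cleanly than the paper does.
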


\begin{proof}
    The total loss $\mathcal{L}(\theta)$ can be divided by gathering all observations corresponding to the same unique state $u_j$:
    $$
    \mathcal{L}(\theta) = \frac{1}{N} \sum_{j=1}^M \sum_{i \in \mathcal{I}_j} \| f_\omega(u_j) - v_i \|^2.
    $$
    For each $j \in \{1, \dots, M\}$, let $z_j = f_\omega(u_j)$ denote the single velocity vector assigned by the autonomous vector field to the state $u_j$. We consider the local error sum $S_j(z_j) = \sum_{i \in \mathcal{I}_j} \| z_j - v_i \|^2$. Expanding this term around the mean $\bar{v}_j$:
    $$
    \| z_j - v_i \|^2 = \| (z_j - \bar{v}_j) + (\bar{v}_j - v_i) \|^2 
    $$
    $$
    \implies \; \| z_j - v_i \|^2 = \| z_j - \bar{v}_j \|^2 + \| \bar{v}_j - v_i \|^2 + 2(z_j - \bar{v}_j)^\top(\bar{v}_j - v_i).
    $$
    Summing over $i \in \mathcal{I}_j$, the cross-product term vanishes due to $\sum_{i \in \mathcal{I}_j} (\bar{v}_j - v_i) = 0$. Thus,
    $$
    S_j(z_j) = |\mathcal{I}_j| \| z_j - \bar{v}_j \|^2 + \sum_{i \in \mathcal{I}_j} \| \bar{v}_j - v_i \|^2.
    $$
    Since $\| z_j - \bar{v}_j \|^2 \geq 0$, the minimum of $S_j(z_j)$ is strictly lower bounded when $z_j = \bar{v}_j$. 
    Substituting this back into the total loss:
    \begin{align}
        \mathcal{L}(\theta) & = \frac{1}{N} \sum_{j=1}^M \left( |\mathcal{I}_j| \| f_\omega(u_j) - \bar{v}_j \|^2 + \sum_{i \in \mathcal{I}_j} \| \bar{v}_j - v_i \|^2 \right) \\
       & \geq \frac{1}{N} \sum_{j=1}^M \sum_{i \in \mathcal{I}_j} \| \bar{v}_j - v_i \|^2 , 
    \end{align}
    This lower bound is determined solely by the dataset $\mathcal{D}$ and is independent of the parameters $\omega$ or the specific architecture of $f_\omega$. 
\end{proof}

Lemma~\ref{lemma_training_nozero} reveals a fundamental limitation in the expressive capacity of the NODE architecture when dealing with singularities. In the case of reduced-order systems with singularities, the error between the velocity learned by the architecture and the velocity at the singular state is unavoidable. The magnitude of this error depends on the differences in velocities observed at the singular state. As a result, the training error is bounded by these velocity differences and cannot be reduced to zero, thus limiting the framework's ability to accurately capture the internal dynamics of the system.

The issue originates from the inability of the NODE architecture to distinguish between states that should be different but have become indistinguishable due to the reduction process. If these states can be distinguished in an appropriate higher-dimensional augmented space, the NODE architecture may be able to accurately learn the dynamics of the reduced-order system. Therefore, augmenting the observation space to capture latent dynamic features provides an effective solution to this problem. This approach is discussed in detail in the following sections.

\section{Proposed Method: Latent-Augmented NODEs}\label{sec4}
To address the limitations demonstrated in Lemma~\ref{lemma_training_nozero} and inspired by the work of CSODE~\cite{CSODE}, this paper proposes the \emph{Latent-Augmented Neural ODE (LA-NODE)} framework, which is capable of learning the dynamics of systems with at least one singularity with high precision. The primary-latent architecture can be formulated as a coupled system of differential equations:
\begin{subequations}\label{LA-node}
\begin{align}
\dot{y}(t) &= \tilde{a}_0\begin{bmatrix}y(t) \\ z(t) \end{bmatrix} 
+ \sum_{i=1}^{H}(\tilde{a}_i)^\top 
\tilde{f}_i\left(\tilde{s}_i\begin{bmatrix}y(t)\\z(t) \end{bmatrix}\right),  
\label{LA-node:1} \\
\dot{z}(t) &= g_\phi(y(t), z(t)).
\label{LA-node:2}
\end{align}
\end{subequations}
where $y(t) \in \mathbb{R}^p$ is the observed state, $z(t) \in \mathbb{R}^k$ represents the augmented variables, and $\tilde{a}_0 \in \mathbb{R}^{p \times (p+k)}$, $\tilde{s}_i \in \mathbb{R}^{r \times (p+k)}$, and $\tilde{a}_i \in \mathbb{R}^{b \times p}$ are trainable weight matrices. The activation function $\tilde{f}_i \colon \mathbb{R}^{r} \to \mathbb{R}^b$, and the neural network $g_\phi \colon \mathbb{R}^{p+k} \to \mathbb{R}^k$ governs the evolution of the augmented state.

By introducing the augmented state $z(t)$, the proposed framework aims to resolve the issue induced by singularities and reduce the training loss. Compared with standard NODEs, the learned dynamics can achieve lower loss over a finite time horizon.

\begin{figure}
    \centering    
\includegraphics[width= 0.8\textwidth]{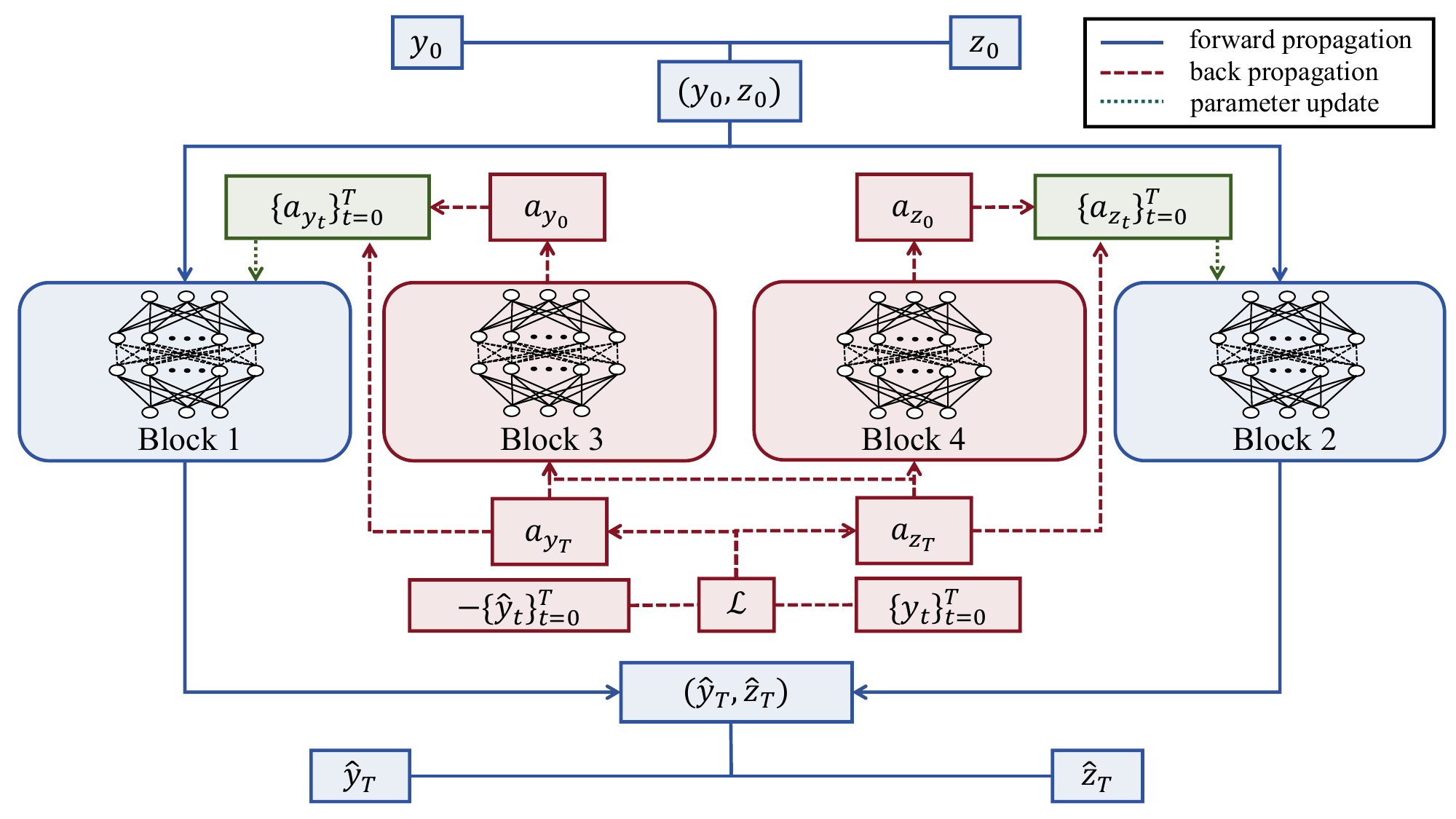}
  \caption{Architecture of the LA-NODE model, including primary neural dynamics (Block 1, Eq.~\eqref{LA-node:1}), latent neural dynamics (Block 2, Eq.~\eqref{LA-node:2}), adjoint primary neural dynamics (Block 3, Eq.~\eqref{adjoint1}), and adjoint latent neural dynamics (Block 4, Eq.~\eqref{adjoint2}). $a_{y_t}$ and $a_{z_t}$ represent the adjoint state defined as Eq.~\eqref{adjointstate}. 
  Blue lines denote forward propagation, red lines denote adjoint (loss-driven) backward propagation, and green lines denote parameter updates as shown in Eq.~\eqref{update}.}
  \label{fig.3}
\end{figure}
More generally, the LA-NODE can be represented in the compact form:
\begin{equation}\label{ald}
\begin{cases}
\dot{y}(t) = \tilde{f}_\theta(y(t), z(t)), \\
\dot{z}(t) = g_\phi(y(t), z(t)),
\end{cases}
\quad \begin{bmatrix}
    y \\ z
\end{bmatrix} \in\mathbb{R}^{p+k}.
\end{equation}

It mitigates the singularity problem encountered by NODEs in learning reduced-order systems by augmenting the state dimension to capture latent variables. The augmented variables $z(t)$ act as a latent workspace that facilitates the dynamics during integration and is discarded after inference.

 The architecture of LA-NODE, as illustrated in Fig.~\ref{fig.3}, in forward propagation, the evolution of the state can be expressed as \eqref{ald}.
As for the backward propagation, the gradients are computed by solving the adjoint differential equations backward in time, following the method in \cite{NODEs}. We first give the definition of the adjoint state, which can be expressed as
\begin{equation}\label{adjointstate}
a_y(t) = \frac{\partial \mathcal{L}}{\partial y(t)}, \, 
\quad 
a_z(t) = \frac{\partial \mathcal{L}}{\partial z(t)}.
\end{equation}
where $\mathcal{L} = \frac{1}{2} \int_0^T \left\| y(t) - y_{\text{true}}(t) \right\|^2 dt$. Under \eqref{adjointstate}, we have
\begin{subequations}\label{adjoint}
\begin{align}
\displaystyle \frac{d a_y(t)}{dt} 
&= - a_y(t)\frac{\partial \tilde{f}_\theta}{\partial y} 
  - a_z(t)\frac{\partial g_\phi}{\partial y} - (y(t) - y_{\text{true}}(t))^\top,
  \label{adjoint1} \\
\displaystyle \frac{d a_z(t)}{dt} 
&= - a_y(t)\frac{\partial \tilde{f}_\theta}{\partial z} 
  - a_z(t)\frac{\partial g_\phi}{\partial z},
  \label{adjoint2}
\end{align}
\end{subequations}
where $a_y(T) = 0,a_z(T)=0$, the loss function depends only on the observed state $y(t)$, while the augmented state $z(t)$ influences the gradient through the coupling terms of the system. Parameters $\theta,\phi$ can be updated by
\begin{equation}\label{update}
\begin{cases}
    \frac{d\mathcal{L}}{d\theta} 
=- \int_T^0 a_y(t)\,\frac{\partial \tilde{f}_\theta(y(t),z(t))}{\partial \theta}\, dt,\\
\frac{d\mathcal{L}}{d\phi} 
=- \int_T^0 a_z(t)\,\frac{\partial g_\phi(y(t),z(t))}{\partial \phi}\, dt.
\end{cases}
\end{equation}
This establishes a complete learning procedure for the proposed LA-NODE framework.

Similar to the previous augment-involved NODE variants (e.g., ANODE  \cite{ANODE, ANODE2}),
    despite the introduction of the augmented state $z(t)$, our learning objective
remains focused exclusively on accurately modeling the evolution of the observed
state $y(t)$.
Accordingly, the loss function is defined only on the $p$-dimensional state
$y(t)$.

\subsection{Trainability of Augmented Model}

The method of LA-NODE enhances the framework by introducing $k$-dimensional augmented variables. Without corresponding ground truth values as supervision, the representations of these variables can be learned by the framework, as clarified in the following lemma.

\begin{lemma}\label{Augmented}
    Let $\mathcal{D} = \{(y_i, v_i)\}_{i=1}^N$ be a dataset where there exists at least one singularity. Although the approximation error for an autonomous NODE in $\mathbb{R}^p$ has a strictly positive lower bound, the augmented system in $\mathbb{R}^{p+k}$ (see Eq. \eqref{ald}) can achieve an approximation error of zero, where $k \geq 1$.
\end{lemma}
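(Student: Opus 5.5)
The plan has two parts. The first part, the strictly positive lower bound for the autonomous NODE, comes straight from Lemma~\ref{lemma_training_nozero}. The second part, zero error for the augmented system, is an explicit construction of a latent trajectory $z(t)$ that separates the colliding observations, followed by an interpolation argument for the lifted vector field.

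\textbf{Lower bound.} By Definition~\ref{d1}, a singularity means some unique state $u_j$ has an index set $\mathcal{I}_j$ containing two indices $i,i'$ with $v_i\neq v_{i'}$. Then $\sum_{i\in\mathcal{I}_j}\|\bar v_j-v_i\|^2>0$: this is a sum of squared deviations from the mean, and it vanishes only if all the $v_i$ coincide. Lemma~\ref{lemma_training_nozero} then gives a lower bound that is strictly positive and independent of $\omega$.

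\textbf{Constructing the latent coordinate.} Assume the samples are taken at distinct times $t_1<\dots<t_N$ along one trajectory. I take $k=1$; larger $k$ works by padding with zero coordinates. Set $z(t)=t$, which is realized by the latent dynamics $g_\phi\equiv 1$. The lifted points $w_i=(y_i,z(t_i))=(y_i,t_i)\in\mathbb{R}^{p+1}$ are then pairwise distinct, because their last coordinates differ. More intrinsically, any $z$ that is injective on the sample times does the job, for instance a monotone function of time or a scalar coordinate of the hidden state $x$ that separates the fiber $h^{-1}(y^\star)$. Distinctness of the $w_i$ is exactly what removes the obstruction of Lemma~\ref{lemma_training_nozero} in $\mathbb{R}^{p+k}$: each unique lifted state now has $|\mathcal{I}_j|=1$, so the data-dependent lower bound becomes $0$.

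\textbf{Realizing zero error.} It remains to show that some $\tilde f_\theta$ satisfies $\tilde f_\theta(w_i)=v_i$ for all $i$. Since the $w_i$ are finitely many distinct points, a smooth interpolant exists, for example a sum of bump functions with disjoint supports or a polynomial interpolant. This interpolant is continuous on a compact set containing the $w_i$, so the universal approximation property of the network class in \eqref{LA-node:1} yields parameters with loss below any $\varepsilon>0$. Exact zero is attained by finite-sample interpolation: a single-hidden-layer network with $N$ suitably chosen neurons can interpolate $N$ distinct points exactly. Because the time-augmented field $(\tilde f_\theta,1)$ is Lipschitz, the lifted system \eqref{ald} has a unique solution through the data, and on its $y$-component the MSE of Lemma~\ref{lemma_training_nozero} equals $0$.

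\textbf{Main obstacle.} The delicate step is the claim of \emph{exactly} zero error rather than arbitrarily small error. I would settle it with the exact finite-point interpolation result for shallow networks with non-polynomial activation, applied to the distinct points $w_i$. If the activation class $\tilde f_i$ is too restricted for that, I would state the result as an infimum equal to zero and note that it is attained for interpolating architectures.

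A secondary subtlety is the interpretation of ``approximation error.'' If it is read as trajectory error rather than velocity error, I would add a Grönwall argument. The interpolating field agrees with the true lifted field $(J(x(t))F(x(t)),1)$ along the curve $(y(t),t)$ on the observation window, so that curve is the solution of \eqref{ald} with initial condition $(y(0),0)$, and the trajectory error also vanishes.
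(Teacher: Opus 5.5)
Your argument is correct and is essentially the paper's proof. Both lift the colliding samples to pairwise distinct points $(y_i,z_i)\in\mathbb{R}^{p+k}$ through the learnable latent dynamics and then fit $\tilde f_\theta(\xi_i)=v_i$ pointwise. You are more explicit than the paper on three points: the latent choice ($z=t$ via $g_\phi\equiv 1$, where the paper only says a suitable $z(0)$ or trajectory can be learned), the strict positivity of the bound from Lemma~\ref{lemma_training_nozero}, and the fitting step (exact finite-sample interpolation rather than an appeal to growing expressive capacity).

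One caveat on your optional trajectory-level remark. Interpolating at the finitely many $w_i$ does not make $\tilde f_\theta$ agree with $\dot y(t)$ along the whole curve $(y(t),t)$, so the claimed agreement with the true lifted field does not follow. For that reading, you should instead uniformly approximate a continuous field such as $(y,z)\mapsto \dot y(z)$ near the curve and integrate the error. This gives arbitrarily small, rather than exactly zero, trajectory error.
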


\begin{proof}
    Consider the augmented state $\xi(t) = [y(t)^\top, z(t)^\top]^\top \in \mathbb{R}^{p+k}$. For any set of singularities in the dataset $\{(y_i, v_i)\}_{i \in \mathcal{I}}$ where $y_i = y^\star$ for all $i \in \mathcal{I}$, the constraint $f_\omega(y_i) = f_\omega(y_j)$ can be omitted if the augmented states are pairwise distinct, i.e., $z_i \neq z_j$ for all $i \neq j$, where $i, j \in \mathcal{I}$.
    
    Since the evolution of $z(t)$ is governed by the learnable dynamics $\dot{z} = g_\phi(y, z)$, the framework can learn an initial condition $z(0)$ or a trajectory that ensures $\xi_i \neq \xi_j$ for all $i, j$. In this augmented space $\mathbb{R}^{p+k}$, the vector field $\tilde{f}_\theta\colon \mathbb{R}^{p+k} \to \mathbb{R}^p$ is a function defined by
    $$
    \dot{y}(t) = \tilde{f}_\theta(\xi(t)).
    $$
    Under $\xi_i \neq \xi_j$ in the domain of $\tilde{f}_\theta$, the model can satisfy:
    $$
    \tilde{f}_\theta(\xi_i) = v_i \quad \text{and} \quad \tilde{f}_\theta(\xi_j) = v_j
    $$
    for $v_i \neq v_j$. Thus, the lower bound of training error from the non-augmented case can be reduced to zero, i.e., the empirical loss $\mathcal{L}(\theta) \to 0$ as the expressive capacity of $\tilde{f}_\theta$ and $g_\phi$ increases.
\end{proof}

Lemma~\ref{Augmented} demonstrates that state augmentation can resolve the singularity issue. Intuitively, in the reduced-order space, multiple trajectories in the phase space may intersect, leading to different velocities at the same state. By introducing augmented dimensions, the proposed framework lifts the system into a higher-dimensional space, where these intersecting trajectories can be separated. As a result, each augmented state corresponds to a unique velocity, allowing the learned vector field to remain bijective while fitting all observations.

\subsection{Resolving Learning Issue via Practical Augmentation}
The trainability of the augmented model in theory allows the LA-NODE to resolve the singularity issue discussed in Section \ref{sec2}. However, the practical effectiveness of the augmentation method depends on whether the augmented dimension $k$ provides sufficient degrees of freedom to span the manifold of observed velocities at each singularity. Specifically, if a state point $y^\star$ meets $p$ (here, $p$ may be greater than two) dependent velocity vectors, the augmented space has to be large enough to distinguish these $p$ states. This leads to a fundamental requirement regarding the dimensionality of the augmentation:

\begin{theorem}\label{co}
Let $\mathcal{V}(y^\star) = \{ v_1, v_2, \dots, v_M \}$ be the set of distinct velocity vectors associated with a singularity $y^\star$ in $\mathbb{R}^p$. Let velocity rank $r(y^\star) = \dim(\text{span}(\mathcal{V}(y^\star)))$. The LA-NODE can represent these dynamics only if the augmented dimension $k$ satisfies
\begin{equation}
k \ge \max_{y^\star} \; \{ r(y^\star) \}.
\end{equation}
\end{theorem}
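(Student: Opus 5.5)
The plan is to prove the inequality as a \emph{necessary} condition by contradiction: assume $k < r(y^\star)$ for some singularity $y^\star$, and show that the velocity map restricted to the fiber over $y^\star$ cannot produce all of $\mathcal{V}(y^\star)$. Because the loss involves only $y$ (Section~\ref{sec4}), everything reduces to the single map
\[
\Phi_{y^\star}\colon \mathbb{R}^k \to \mathbb{R}^p,\qquad z \mapsto \tilde{f}_\theta(y^\star, z).
\]
Representing the dynamics at $y^\star$ means that there are latent values $z_1,\dots,z_M$ with $\Phi_{y^\star}(z_m)=v_m$ for $m=1,\dots,M$, where $z_1,\dots,z_M$ are pairwise distinct by the argument of Lemma~\ref{Augmented}.

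First I have to fix what ``represent'' means, and I expect this to be the main obstacle. Lemma~\ref{Augmented} shows that a general nonlinear $\tilde{f}_\theta$ can already interpolate any finite set of velocities with $k=1$, since distinct scalars $z_m$ suffice. So the theorem cannot hold for arbitrary smooth $\tilde{f}_\theta$; a structural notion of representation is needed. I would adopt the one suggested by the primary block~\eqref{LA-node:1}. Its latent contribution passes through the linear maps $\tilde{a}_0$ and $\tilde{s}_i$ acting on $[y;z]$. I therefore require the latent coordinates to encode the velocity directions at $y^\star$ \emph{linearly}: there is a matrix $B(y^\star)\in\mathbb{R}^{p\times k}$ with $v_m = B(y^\star) z_m$ for all $m$. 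This is the first-order (Jacobian) encoding $B(y^\star)=\partial_z \tilde{f}_\theta(y^\star,\cdot)$, with the $y$-dependent offset absorbed into $z$ by a shift. Equivalently, I linearize $\Phi_{y^\star}$ around a base latent point and ask that the linearization reproduce the velocity structure exactly. I would state this as an explicit standing assumption at the start of the proof.

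Under this assumption the core step is a short rank count:
\[
\operatorname{span}\mathcal{V}(y^\star) \subseteq \operatorname{Im} B(y^\star),
\qquad\text{so}\qquad
r(y^\star) \le \operatorname{rank} B(y^\star) \le \min(k,p) \le k .
\]
If instead one keeps an affine offset $c=\tilde{f}_\theta(y^\star,z_0)$, the same argument gives $r(y^\star)\le k+1$, and one gets $k\ge r(y^\star)-1$. I would note this variant and explain that the normalization $c\in\operatorname{Im}B$ recovers the stated bound. A single latent vector $z$ must serve every singular point along the trajectory simultaneously, so the bound must hold for each $y^\star$. Taking the maximum over all singularities then gives $k\ge \max_{y^\star} r(y^\star)$.

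Finally, I would add a remark that the bound is attained: for each $y^\star$, choose $B(y^\star)$ whose columns are a basis of $\operatorname{span}\mathcal{V}(y^\star)$, padded with zero columns up to $k$ columns, and take $z_m$ to be the coordinate vectors of $v_m$ in that basis. Lemma~\ref{Augmented} then supplies the latent dynamics $g_\phi$ that steer $z(t)$ to these values at the visiting times. This shows that the condition is sharp within the linear-encoding interpretation. It also makes clear that the theorem is a statement about the number of independent velocity directions the latent space must carry, not about merely separating points.
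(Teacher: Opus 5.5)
Your rank count is the same core step as the paper's proof. The paper sets $\mathcal F(y^\star)=\{\tilde f_\theta(y^\star,z): z\in\mathbb R^k\}$ and argues $\operatorname{span}\mathcal V(y^\star)\subseteq\operatorname{span}\mathcal F(y^\star)$, hence $r(y^\star)\le k$. It justifies $\dim\operatorname{span}\mathcal F(y^\star)\le k$ only by saying ``$\dim\mathcal F(y^\star)\le k$ for a smooth map'', and that inference is invalid. Smoothness bounds the Hausdorff dimension of the image, which is a vacuous constraint on the finite set $\mathcal V(y^\star)$; it does not bound the dimension of the linear span, and a curve can span all of $\mathbb R^p$. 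Your observation that the statement cannot hold for nonlinear $\tilde f_\theta$ is therefore correct, and the failure occurs inside \eqref{LA-node:1} itself. Take $p=2$, $k=1$, $H=1$, $\tilde a_0=0$, $\tilde a_1=I_2$, $\tilde s_1=\begin{bmatrix}0&0&1\\0&0&-1\end{bmatrix}$ and $\tilde f_1$ the componentwise ReLU. Then $\tilde f_\theta(y,z)=(\max\{z,0\},\max\{-z,0\})^\top$ gives $e_1$ and $e_2$ at $z=\pm1$, so $r(y^\star)=2>k$; a logistic sigmoid in place of ReLU still gives two independent values. At the trajectory level, $g_\phi\equiv1$, $z(0)=0$, $\tilde f_\theta(y,z)=\dot y_{\mathrm{true}}(z)$ reproduces any observed trajectory with $k=1$; this is the time augmentation the introduction itself acknowledges. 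So Theorem~\ref{co} as stated cannot be proved by anyone. Your linear-encoding hypothesis is not a clarification of ``represent'' but a change of the theorem, and you should present your argument as a proof of a corrected statement about a restricted model class.

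Even for that corrected statement, two of your justifications need repair. First, $v_m=B(y^\star)z_m$ is not a ``first-order (Jacobian)'' reading of LA-NODE. Once the terms $\tilde f_i(\tilde s_i[y;z])$ depend on $z$, $\partial_z\tilde f_\theta(y^\star,z)$ varies with $z$, and in general $\tilde f_\theta(y^\star,z_m)\neq\partial_z\tilde f_\theta(y^\star,z_0)\,z_m$. Demanding that a linearization reproduce the velocities exactly simply restricts to $\tilde f_\theta$ affine in $z$ (e.g.\ only $\tilde a_0$ acting on $z$). Second, the ``normalization'' $c\in\operatorname{Im}B$ is not without loss of generality, even in that affine class. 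With no nonlinear terms and $\tilde a_0=\begin{bmatrix}1&0&0\\0&0&1\end{bmatrix}$ one gets $\dot y=(y_1,z)^\top$; at $y^\star=(1,0)^\top$ the values $z=0,1$ produce $(1,0)^\top$ and $(1,1)^\top$, so $r(y^\star)=2$ with $k=1$. What an affine encoding bounds is the affine dimension $\dim\operatorname{span}\{v_m-v_1\}\ge r(y^\star)-1$, which yields only your variant $k\ge r(y^\star)-1$. The bound $k\ge r(y^\star)$ needs $c\in\operatorname{Im}B$ as an explicit hypothesis; it holds, for instance, for the linear block at $y^\star=0$, as in the paper's illustrative example. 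Your sharpness remark is fine in spirit, but it also needs $\dot y(t)\in\operatorname{Im}B(y(t))$ along the whole trajectory and a lifted curve $(y(t),z(t))$ with no conflicting self-intersections. Lemma~\ref{Augmented} only asserts these, it does not prove them.
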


\begin{proof}
     Let $\mathcal{F}(y^\star) = \{ \tilde{f}_\theta(y^\star, z) \mid z \in \mathbb{R}^k \}$ be the set of velocities representable by the model at a fixed observation $y^\star$. Since $\tilde{f}_\theta$ is a continuous mapping (typically a neural network), the image $\mathcal{F}(y^\star)$ is a subset in $\mathbb{R}^p$ whose dimension is bounded by the dimension of its latent input $z$. Formally, by the properties of smooth mappings, we have $\dim(\mathcal{F}(y^\star)) \le k$. For the model to capture the true underlying dynamics without error, the set of true velocities $\mathcal{V}(y^\star)$ must be contained within the representable set of the model, i.e., $\mathcal{V}(y^\star) \subseteq \mathcal{F}(y^\star)$. This inclusion implies that the subspace spanned by the true velocities must not exceed the dimensions available in the latent workspace:
     \begin{equation}
     \operatorname{span}(\mathcal{V}(y^\star)) \subseteq \operatorname{span}(\mathcal{F}(y^\star)) \quad \implies \quad r(y^\star) \le k.
     \end{equation}
     The map $\tilde{f}_\theta$  acts as a projection from the augmented latent manifold back to the observed tangent space. If $k < r(y^\star)$, the mapping forces at least $r(y^\star) - k$ dimensions of velocity information to be lost, leading to the scenario established in Lemma \ref{lemma_training_nozero}. Extending this requirement to all singularities in the domain yields $k \ge \max_{y^\star} \{ r(y^\star) \}$, which completes the proof.
\end{proof}

\begin{remark}
Different singularities may possess different velocity ranks, and Theorem~\ref{co} provides a global lower bound determined by the most demanding singularity. Since these ranks are generally unknown in practice, the augmentation dimension $k$ is typically treated as a hyperparameter. Choosing $k$ below the maximum singularity rank leads to the nonzero training-error lower bound characterized in Lemma~\ref{lemma_training_nozero}, whereas an excessively large $k$ mainly increases computational cost and may introduce unnecessary model complexity. Practically, a common strategy is to start from a small value (e.g., $k=1$) and increase it incrementally until satisfactory training performance is achieved.
\end{remark}

To further clarify the statement in Theorem~\ref{co}, we present the following illustrative example: Consider a reduced-order state $y(t) \in \mathbb{R}^3$ with a singularity at $y^\star = \mathbf{0}$ at $t \in \{t_1, t_2, t_3\}$ with velocities (here $M=3$):
    \begin{equation*}v_1 = \begin{bmatrix} 1 \\ 0 \\ 0 \end{bmatrix}, \quad v_2 = \begin{bmatrix} 0 \\ 1 \\ 0 \end{bmatrix}, \quad v_3 = \begin{bmatrix} 1 \\ 1 \\ 0 \end{bmatrix}.
    \end{equation*}
    Since $v_3 = v_1 + v_2$, we have $r(\mathbf{0}) = \dim(\text{span}\{v_1, v_2, v_3\}) = 2$. By Theorem~\ref{co}, an augmented dimension $k < 2$ constrains the representable velocities $\mathcal{F}(y^\star)$ to a manifold of dimension less than $r = 2$, making the non-collinear velocities $\{v_i\}$ inseparable. Conversely, with $k \ge 2$, the framework can assign distinct augmented states $z_i \in \mathbb{R}^k$ to $y^\star$, effectively learning the local dynamics by augmenting the trajectory to $\mathbb{R}^{3+k}$ where the velocity uniqueness is satisfied.





In the following, we present the experimental setup and results to further validate the performance of LA-NODE in addressing the singularity issue in reduced-order systems.
\section{Experiments}\label{sec5}
We select two industrial dynamics: an interior permanent magnet synchronous motor (IPMSM) drive \cite{PMSM} and a distributed energy system (DES) \cite{DES}. Under specific operating conditions, their two reduced-order systems exhibit different singularity properties, including two-trajectory and three-trajectory singularities, which correspond to cases where the singularity point admits two or three distinct velocity directions, respectively.

\subsection{Experimental Setup}

\paragraph{Hardware and Software:} All experiments were conducted on a Windows 11 workstation with an Intel Core i9-14900KF CPU, 16 GB RAM, and an NVIDIA GeForce RTX 4090 GPU (24 GB). The models were implemented in PyTorch 1.11.0 and Python 3.9 using CUDA 11.3 for acceleration.

\paragraph{Metrics and Training:}  In the experiments, all data were standardized and split into training and testing sets with a ratio of 9:1. All models were trained using the Adam optimizer with a learning rate of 0.01. The mean squared error (MSE) is employed as the loss function for both the training and testing phases. The number of training epochs was configured differently for each system: 1000 epochs for the IPMSM system and 700 epochs for the DES. Furthermore, to reduce randomness and enhance the reliability of the results, each experiment was independently repeated five times, and the results were comprehensively analyzed.

\paragraph{Baselines:} To benchmark the proposed LA-NODE, three baseline frameworks are selected: Transformer \cite{Tran}, CSODE \cite{CSODE}, and ANODE \cite{ANODE}. 
\begin{itemize}
    \item \textbf{Transformer}: The experiment requires a classical neural network baseline that does not rely on ODE-based frameworks to compare the performance of the proposed framework. The Transformer, a widely adopted neural network framework, is effective in handling sequential data and has been applied across various domains. Therefore, it serves as an appropriate baseline framework for comparison. It can be expressed as follows:
    \begin{equation}
        \hat{y} = \text{TransformerEncoder}(y + \text{PE})
    \end{equation}
    where $\text{PE}$ stands for positional encoding, the TransformerEncoder is a framework consisting of multiple layers of self-attention mechanisms and feed-forward networks. By adding positional encodings to the input data, the Transformer framework can incorporate sequential position information, enabling it to perform time series prediction.
    \item \textbf{CSODE}: The CSODE is used as an unaugmented baseline model to validate the correctness of Lemma \ref{lemma_training_nozero}. Its structure is similar to \eqref{LA-node:1}, but without the augmented state. Its form can be expressed as
    \begin{equation}
        \dot{y}(t) = a_0y(t) 
+ \sum_{i=1}^{H}(a_i)^\top 
f_i\left(s_iy(t)\right).
    \end{equation}
    \item \textbf{ANODE}: The ANODE is a general-purpose augmentation framework chosen as a baseline model due to its status as the most representative augmentation framework currently available. It incorporates an augmented dimension but does not adopt the CSODE structure, instead relying on the classic NODE architecture. The expression is as follows:
    \begin{equation}
        \frac{d}{dt} \left[ \begin{array}{c} y(t) \\ a(t) \end{array} \right] = f \left( \left[ \begin{array}{c} y(t) \\ a(t) \end{array} \right], t \right)
    \end{equation}

\end{itemize}

 Comparative experiments with these frameworks are conducted to demonstrate the performance advantages of the proposed framework.

\subsection{Industrial Systems for Experiments}
The frameworks were validated using physical experimental data from an IPMSM model (belonging to a subclass of IPMSMs), and a simulated DES model.

\subsubsection{IPMSM Drive Model}\label{sec:PMSM}

The IPMSM is a typical strongly nonlinear and highly coupled electromechanical system. Owing to its high power density, high efficiency, and superior speed regulation performance, it has been widely employed in high-performance drive systems. In the $d-q$ reference frame, its dynamic behavior can be formulated as:
\begin{equation}\label{eq.12}
\left\{\begin{aligned}
   &\dot{i}_d=-\frac{R_s}{L_d}{i}_{d}+\frac{\omega_eL_q}{L_d}i_q+\frac{1}{L_d}u_d+ \Delta \theta_d(i_d,i_q), \\
   &\dot{i}_q=-\frac{R_s}{L_q}{i}_{q}-\frac{\omega_eL_d}{L_q}i_d+\frac{1}{L_q}u_q-\frac{\omega_e\psi_f}{L_q}+\Delta \theta_q(i_d,i_q), 
   \end{aligned} \right.
\end{equation}
where $R_s$ is the stator resistance; $\omega_e$ is the electrical angular velocity; $\psi_f$ is the permanent-magnet (PM) flux linkage; $i_d, i_q$ are the stator phase currents; $u_d, u_q$ are the terminal voltages; and $L_d, L_q$ represent the $dq$-axis inductances. The terms $\Delta \theta_d(\cdot)$ and $\Delta \theta_q(\cdot)$ represent cumulative model uncertainties and lumped nonlinearities. The mechanical dynamics of the motor, specifically the evolution of the electrical angular velocity $\omega_e$, are governed by
\begin{equation}\label{eq.omega}
\dot{\omega}_e = \frac{n_p}{J}(T_e - T_L) - \frac{B}{J} \omega_e,
\end{equation}
where the electromagnetic torque $T_e$ satisfies
\begin{equation}
T_e = \frac{3n_p}{2} \left[ \psi_f i_q + (L_d - L_q)i_d i_q \right].
\end{equation}
In these expressions, $n_p$ denotes the number of pole pairs, $J$ is the moment of inertia, $B$ is the viscous friction coefficient, and $T_L$ is the load torque. 

In this experiment, the IPMSM system is reduced in dimension via the mapping 
$h_1 \colon \mathbb{R}^3 \to \mathbb{R}^2$, shown as follows: 
\begin{equation}\label{pmsmd}
\begin{aligned}
y_a &= h_1(x_a), \\
x_a &=
\begin{bmatrix}
i_d \\ i_q \\ \omega_e
\end{bmatrix}
\in \mathbb{R}^3, \quad
y_a =
\begin{bmatrix}
i_d \\ i_q
\end{bmatrix}
\in \mathbb{R}^2.
\end{aligned}
\end{equation}

The Jacobian matrix of the reduction mapping is given by
\begin{equation}
J_{h_1}(x_a)
=
\frac{\partial h_1}{\partial x_a}
=
\begin{bmatrix}
1&0&0\\
0&1&0
\end{bmatrix}.
\end{equation}
The reduced-order velocity is determined by
\begin{equation}
\dot{y}_a=J_{h_1}(x_a)F(x_a).
\end{equation}

Since $\omega_e$ is eliminated during the dimension reduction, different full-order states with different $\omega_e$ values may correspond to the same reduced observation $y_a^\star$, such as
\begin{equation}
h_1(x_a^{(1)})=h_1(x_a^{(2)})=y_a^\star,\quad J_{h_1}(x_a^{(1)})F(x_a^{(1)})\neq J_{h_1}(x_a^{(2)})F(x_a^{(2)}).
\end{equation}

Therefore, the IPMSM system exhibits a two-trajectory singularity. The corresponding velocity ambiguity set is defined as
\begin{equation}
\mathcal{V}(y_a^\star)
=
\{J_{h_1}(x_a)F(x_a)\mid h_1(x_a)=y_a^\star\}.
\end{equation}

As the two velocity trajectories exist, their velocity vectors span a one-dimensional subspace:
\begin{equation}
r(y_a^\star)
=
\dim(\operatorname{span}(\mathcal{V}(y_a^\star)))
=1.
\end{equation}

By Theorem~\ref{co}, the minimum augmentation dimension satisfying the representational requirement is therefore selected as $k=1$.
The values of key physical parameters of the IPMSM utilized in this study are summarized in Table~\ref{tab:PMSM}.

\begin{table}[!htb]
\centering
\caption{Key parameters of the IPMSM system\label{tab:PMSM}}
\renewcommand{\arraystretch}{1.2} 
\begin{tabular}{lcc}
\toprule
Parameter & Symbol & Value [Unit] \\
\midrule
Rated power         & $P$            & 150 [W] \\
Rated speed         & $N_r$          & 1000 [rpm] \\
Rated load          & $T_L$          & 1.2 [N$\cdot$m] \\
Rated voltage       & $U_r$          & 24 [V] \\
Rated current       & $I_r$          & 7.5 [A] \\
Number of pole pairs & $n_p$         & 4 [--] \\
Rotational inertia  & $J$            & $7.06 \times 10^{-5}$ [kg$\cdot$m$^2$] \\
PM flux linkage     & $\psi_f$       & 0.0256 [Wb] \\
Stator resistance   & $R_s$          & 0.72 [$\Omega$] \\
$d$-axis inductance & $L_d$          & 0.2 [mH] \\
$q$-axis inductance & $L_q$          & 0.4 [mH] \\
\bottomrule
\end{tabular}
\end{table}

\subsubsection{Distributed Energy System}

 DES is a dynamic system composed of photovoltaic generation units, energy storage units, and dynamic loads, capturing the essential characteristics of multi-energy coupling and energy balance in practical microgrid or islanded operation scenarios. In this study, the DES is used as another benchmark system to evaluate further the modeling and generalization performance of the proposed LA-NODE framework in scenarios involving trajectory singularity. Its continuous-time dynamics can be expressed by the following equations:

\begin{equation}
\left\{
\begin{aligned}
\dot{x}_1 &= -\frac{\eta_{\textrm{bat}} \cdot P_{\textrm{bat}}}{3600 \cdot E_{\textrm{bat,max}}}, \\
\dot{x}_2 &= \frac{x_3 + P_{\textrm{bat}} - x_4 - D(x_2 - \omega_0)}{M}, \\
\dot{x}_3 &= \frac{P_{\textrm{PV,ref}} - x_3}{T_{\textrm{PV}}}, \\  
\dot{x}_4 &= \frac{P_{\textrm{load,ref}} - x_4}{T_{\textrm{load}}}. 
\end{aligned}
\right.
\end{equation}
Here, $x_1$ denotes the state of charge of the energy storage unit, $x_2$ represents the system angular frequency ($\omega$), and $x_3$ and $x_4$ correspond to the actual output power of the photovoltaic unit and the actual load power, respectively.
The control input $P_{\text{bat}}$, regulated by a proportional-integral controller to maintain frequency stability, is designed as
\begin{equation}P_{\text{bat}}(t) = -K_p \Delta\omega(t) - K_i \int_{0}^{t} \Delta\omega(\tau) d\tau,\end{equation}
where $\Delta\omega = x_2 - \omega_0$ denotes the frequency deviation from the nominal angular frequency $\omega_0$. To ensure the physical plausibility of system operation, constraints are imposed on $x_1$ and $x_2$ during the simulation: 
\begin{equation}
 0 \le x_1(t) \le 1, \quad
 0.9  \times \omega_0 \le x_2(t) \le 1.1 \times \omega_0.
\label{eq:DES_constraints}
\end{equation}
In the simulation, the values of system parameters and operational constraints are selected as shown in Table~\ref{tab:DES_params}.

In this experiment, the internal state of the DES system is observed via the mapping 
$h_2 \colon \mathbb{R}^4 \to \mathbb{R}^2$, shown as follows: 

\begin{equation}\label{desd}
\begin{aligned}
y_b &= h_2(x_b), \\
x_b &=
\begin{bmatrix}
x_1 \\ x_2 \\ x_3 \\x_4
\end{bmatrix}
\in \mathbb{R}^4, \quad
y_b =
\begin{bmatrix}
x_3 \\ x_4
\end{bmatrix}
\in \mathbb{R}^2.
\end{aligned}
\end{equation}

The Jacobian matrix of the reduction mapping is given by
\begin{equation}
J_{h_2}(x_b)
=
\frac{\partial h_2}{\partial x_b}
=
\begin{bmatrix}
0&0&1&0\\
0&0&0&1
\end{bmatrix}.
\end{equation}
The reduced-order velocity is determined by
\begin{equation}
\dot{y}_b
=
J_{h_2}(x_b)F(x_b).
\end{equation}

Similar to the IPMSM case, multiple internal states  may correspond to the same reduced observation, 
\begin{equation}
h_2(x_b^{(1)})
=
h_2(x_b^{(2)})
=
h_2(x_b^{(3)})
=
y_b^\star.
\end{equation}

Therefore, for the DES system, we have
\begin{equation}
\mathcal{V}(y_b^\star)
=
\{J_{h_2}(x_b)F(x_b)\mid h_2(x_b)=y_b^\star\},\quad r(y_b^\star)
=
\dim(\operatorname{span}(\mathcal{V}(y_b^\star)))
=2.
\end{equation}

Under Theorem~\ref{co}, the minimum augmentation dimension satisfying the representational requirement is therefore selected as $k=2$.

\begin{table}[!htb]
\centering
\caption{Key parameters of the DES \label{tab:DES_params}}
\renewcommand{\arraystretch}{1.2} 
\begin{tabular}{lcc}
\toprule
Parameter & Symbol & Value [Unit] \\
\midrule
Maximum battery capacity    & $E_{\text{bat,max}}$  & 5000 [kWh] \\
Charge/discharge efficiency & $\eta_{\text{bat}}$   & 0.95 \\
Inertia constant            & $M$                   & 10 [s] \\
Damping coefficient         & $D$                   & 1 [p.u.] \\
Rated angular frequency     & $\omega_0$            & $100\pi$ [rad/s] \\
PV dynamic time constant    & $T_{\text{PV}}$       & 5 [s] \\
Load dynamic time constant  & $T_{\text{load}}$     & 3 [s] \\
Proportional / Integral gain & $K_P / K_I$          & 20 / 50 \\
Max. battery power          & $P_{\text{bat}}^{\text{max}}$ & 1500 [kW] \\
Reference PV power range    & $P_{\text{PV,ref}}$   & [0, 1800] [kW] \\
Reference load power range  & $P_{\text{load,ref}}$ & [1500, 2500] [kW] \\
\bottomrule
\end{tabular} 
\end{table}

\subsection{Experimental Results and Analysis}
\subsubsection{Experiment on IPMSM Drive}
A dataset comprising $300$ pairs of $\{i_d, i_q\}$ was obtained by subjecting the system to a reverse step signal. 
The IPMSM experimental data used in this work were collected from the same experimental platform as that in \cite{mei2025learning}. 
The raw current trajectories contain noise-like disturbances caused by uncertain terms in the IPMSM model. 
Following the preprocessing procedure in \cite{mei2025learning}, Gaussian regression was employed to smooth the measured signals while preserving their dynamic characteristics. 
The processed data were subsequently normalized and used for model training and evaluation.
Subsequently, the four experimental frameworks were evaluated using the processed data. In the specific IPMSM system, the trajectory singularities exhibit only two distinct velocity directions. By Theorem \ref{co}, a one-dimensional augmentation is sufficient for trajectory separation in this experiment. 

\begin{figure}
    \centering
    \includegraphics[width=0.7\textwidth]{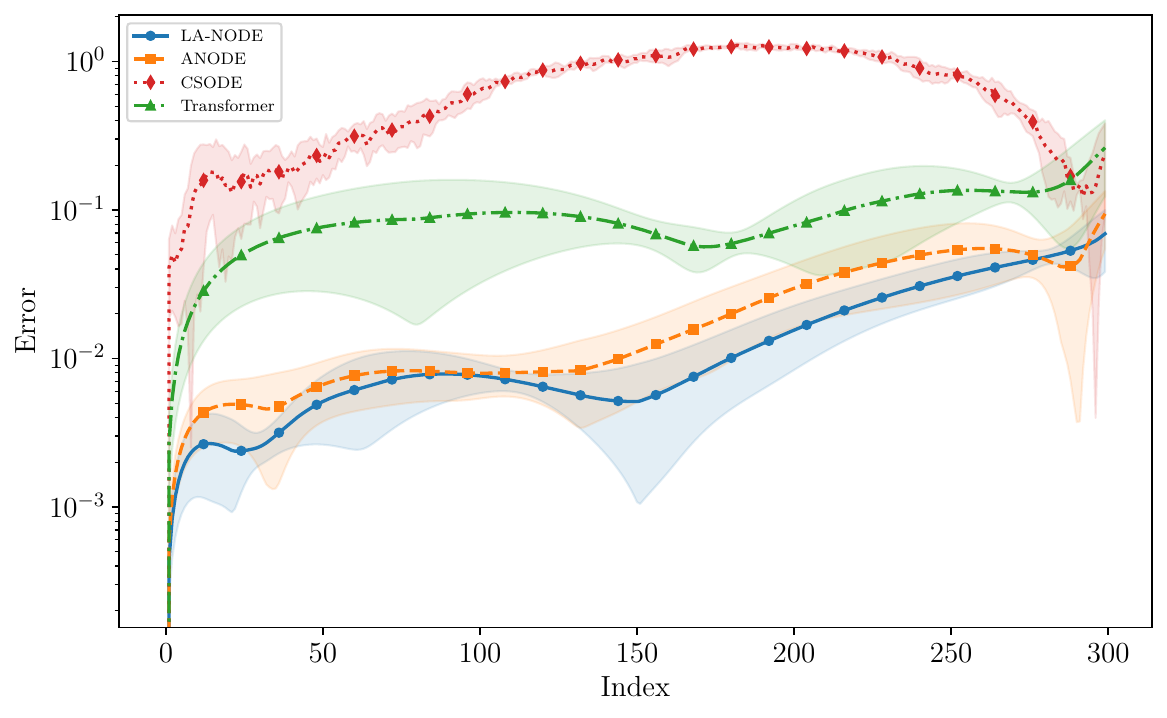}
    \caption{Comparison of trajectory-wise errors ($Error=\Big\|\sqrt{(i_{d,true}(i) - i_{d,pred}(i))^2 + (i_{q,true}(i) - i_{q,pred}(i))^2}\Big\|$) across four frameworks over the full dataset for IPMSM. Index represents the data point position in the IPMSM dataset.}
    \label{fig.PMSM1}
\end{figure}

As shown in Fig.~\ref{fig.PMSM1}, the four frameworks exhibit distinct trajectory-wise error levels. In terms of overall error level, the proposed LA-NODE achieves the lowest error, remaining within [0.001,0.01]. In contrast, ANODE yields a slightly higher overall error, mostly on the order of 0.01, while the traditional Transformer remains at around 0.1. CSODE achieves the highest error, remaining at the level of 1 for most indices with fluctuations. These results suggest that ODE-based approaches, such as LA-NODE and ANODE, are more suitable for modeling continuous dynamical systems in this setting. In this experiment, CSODE fails to capture dynamics with singularities.

Meanwhile, the variance distribution from the shaded regions shows that CSODE has the smallest fluctuation; however, due to its large error, this low variability is not indicative of stable modeling performance. LA-NODE and ANODE exhibit narrow fluctuations, demonstrating stronger robustness and consistency. By contrast, Transformer yields the largest variance, implying it is less stable.

\begin{figure}
    \centering
    \subfigure[]{
        \includegraphics[width=0.48\columnwidth]{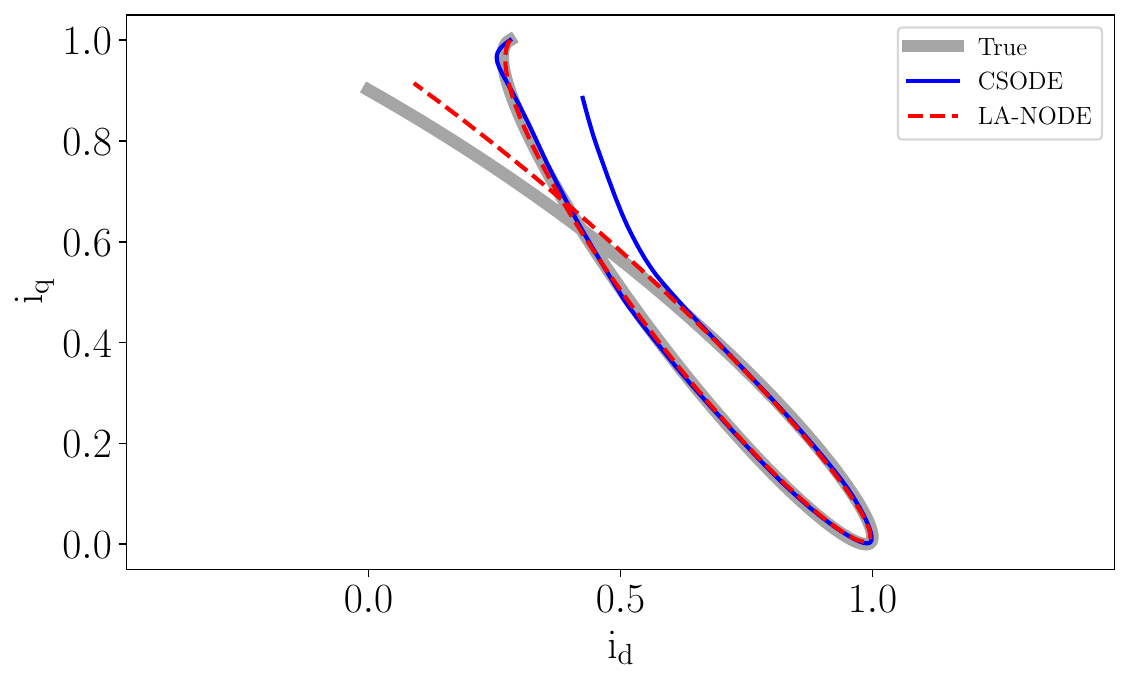}
        \label{fig.PMSM21}
    }
    \hfill
    \subfigure[]{
        \includegraphics[width=0.48\columnwidth]{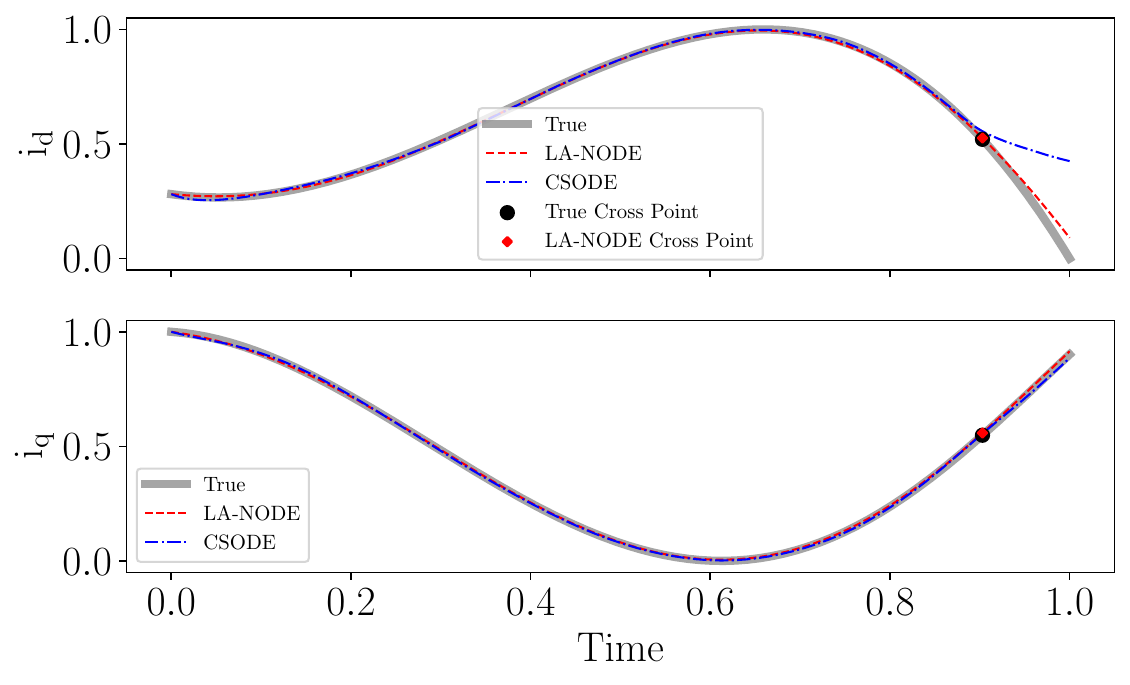}
        \label{fig.PMSM22}
    }
    \caption{Performance comparison between CSODE and LA-NODE based on IPMSM trajectories: (a) Phase portraits, (b) Time-domain responses. }
    \label{fig.PMSM2}
\end{figure}

To further validate Lemma \ref{lemma_training_nozero}, the learning performances of LA-NODE and CSODE are compared from the perspectives of phase-space trajectories and time responses in Fig.~\ref{fig.PMSM2}, respectively. Overall, both methods approximate system behavior well before singularities. However, as the system approaches the singular region, the predictions of CSODE gradually deviate from the true trajectories, exhibiting significant errors in the direction of its vector field.

In contrast, LA-NODE maintains consistent performance across the entire admissible state space. Fig.~\ref{fig.PMSM22} illustrates the presence of singularities along three trajectories. It can be observed that, although there exists a slight discrepancy between the singular points identified by LA-NODE and those of the true trajectories, the learned vector field remains close to the ground truth. This indicates that LA-NODE preserves the correct dynamic evolution and captures singular structures. These results also provide empirical support for Lemma \ref{Augmented}.

\begin{figure}
    \centering
    \subfigure[]{
        \includegraphics[width=0.48\columnwidth]{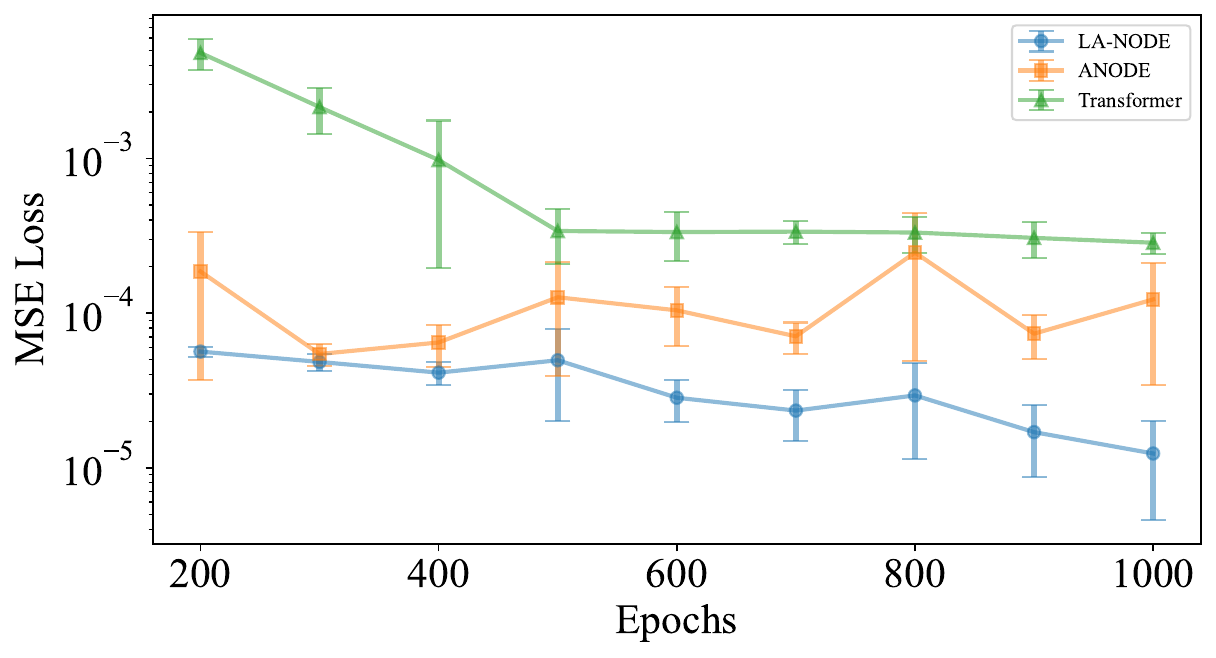}
        \label{fig.PMSM31}
    }
    \hfill
    \subfigure[]{
        \includegraphics[width=0.48\columnwidth]{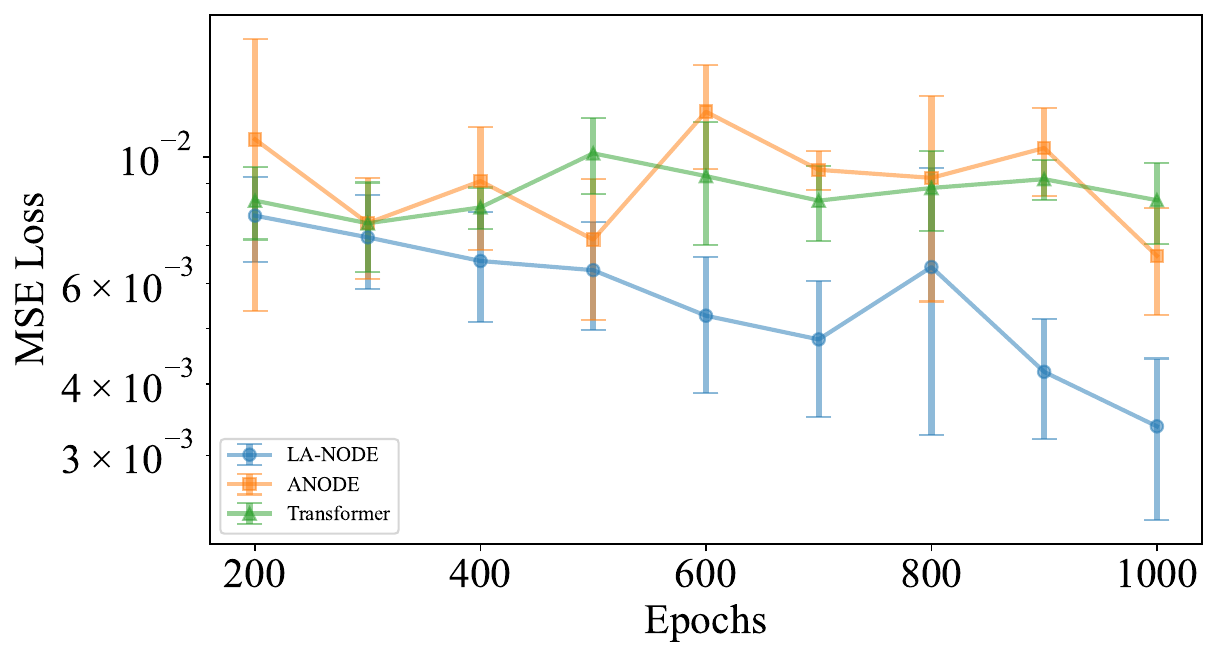}
        \label{fig.PMSM32}
    }
    \caption{Comparison of MSE Loss across different frameworks with epochs for IPMSM: (a) Training set, (b) Testing set.}
    \label{fig.PMSM3}
\end{figure}


Fig. \ref{fig.PMSM3} illustrates the evolution of the MSE Loss for different frameworks across both the training and testing sets.  LA-NODE consistently outperforms both ANODE and Transformer in terms of MSE Loss. As shown in Fig. \ref{fig.PMSM31}, the MSE Loss of LA-NODE on the training set remains lower than that of the other two frameworks, ranging from $[0.0001,0.00001]$, while ANODE and Transformer exhibit MSE Loss values in the range of $[0.001,0.0001]$. As shown in Fig. \ref{fig.PMSM32}, on the testing set, the MSE Loss of LA-NODE is initially comparable to that of ANODE and Transformer, but as the number of epochs increases, its MSE Loss significantly decreases and remains lower than that of the other two frameworks.

In summary, these results confirm that LA-NODE is more effective in modeling the reduced-order system in IPMSM with singularity in the settings, achieving superior modeling performance compared with representative baseline frameworks.

\subsubsection{Experiment on DES}
\begin{figure*}
\centering

\subfigure[]{
    \includegraphics[width=0.22\textwidth]{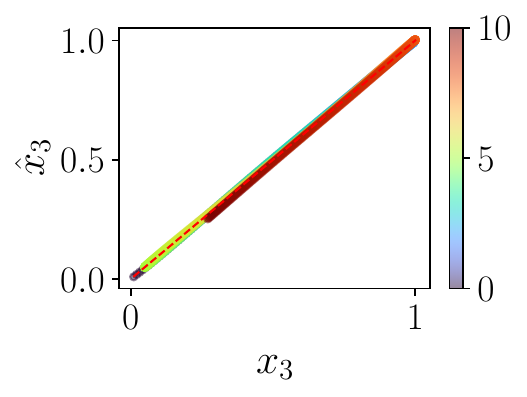}
}
\hfill
\subfigure[]{
    \includegraphics[width=0.22\textwidth]{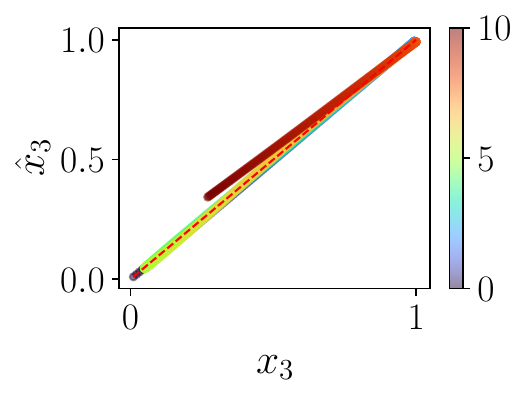}
}
\hfill
\subfigure[]{
    \includegraphics[width=0.22\textwidth]{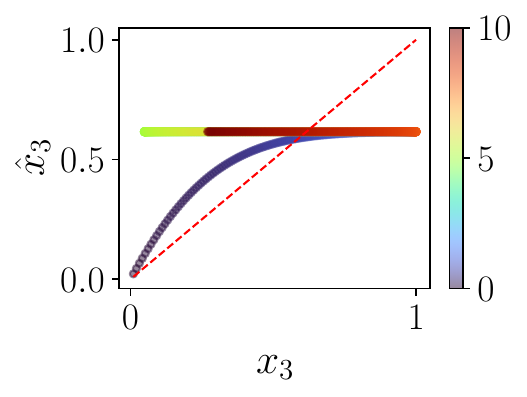}
}
\hfill
\subfigure[]{
    \includegraphics[width=0.22\textwidth]{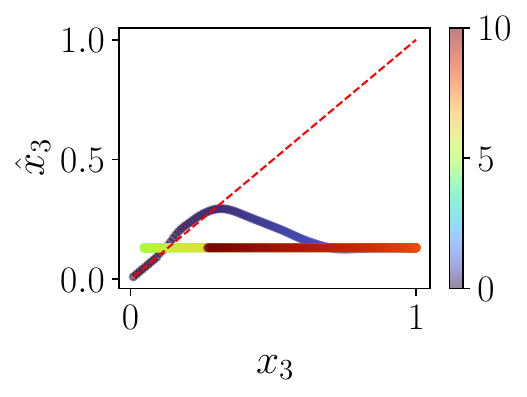}
}

\vspace{0.5em}

\subfigure[]{
    \includegraphics[width=0.22\textwidth]{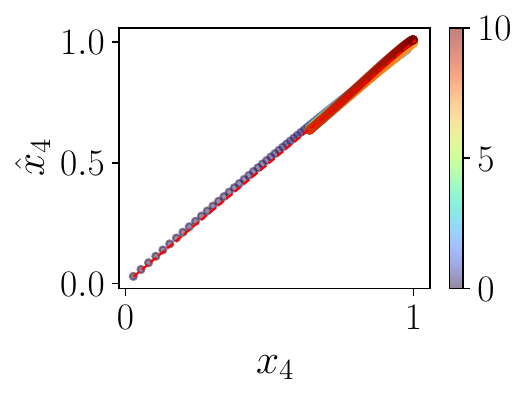}
}
\hfill
\subfigure[]{
    \includegraphics[width=0.22\textwidth]{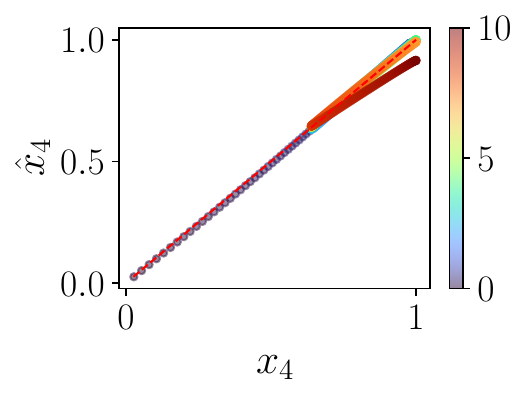}
}
\hfill
\subfigure[]{
    \includegraphics[width=0.22\textwidth]{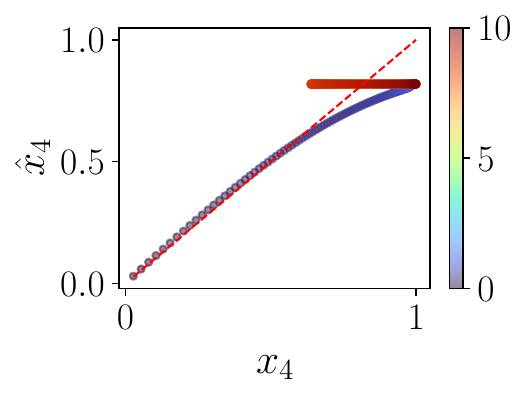}
}
\hfill
\subfigure[]{
    \includegraphics[width=0.22\textwidth]{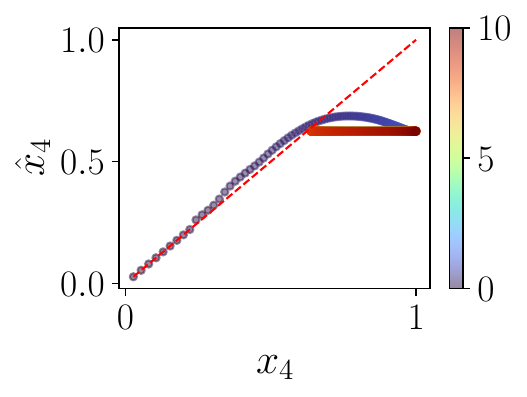}
}

\caption{Comparison of state variables for DES (the color variation indicates the progression of time): (a) LA-NODE, (b) ANODE, (c) CSODE, (d) Transformer, for $x_3$; (e) LA-NODE, (f) ANODE, (g) CSODE, (h) Transformer, for $x_4$.}
\label{fig.des1}
\end{figure*}

In this experiment, time-series data of $\{x_3,x_4\}$ were generated based on the aforementioned simulation model to construct the dataset for model training and testing. The simulation lasts 10s with a sampling interval of 0.01s, yielding 1000 samples. Despite the foundational nature of these variables, their trajectories exhibit singularities, posing a challenge for conventional frameworks. In the proposed architecture, the predicted $x_3$ and $x_4$ are incorporated into the dynamics of $x_2$ to support the PI controller’s frequency regulation.

\begin{table*}[htbp]
\centering
\caption{Comparison of four dynamic modeling frameworks for DES}
\label{tab.desex}

\setlength{\tabcolsep}{4pt}
\renewcommand{\arraystretch}{1.25} 
\begin{tabular}{ccccc}

\toprule
\textbf{Model} & \textbf{MAE} & \textbf{RMSE} & \textbf{Max Error} & \textbf{Std Error} \\
\hline

LA-NODE (ours) 
& \textbf{0.0075 $\pm$ 0.0037}
& \textbf{0.0097 $\pm$ 0.0049}
& \textbf{0.0270 $\pm$ 0.0138}
& \textbf{0.0061 $\pm$ 0.0033} \\

CSODE 
& 0.1771 $\pm$ 0.0053
& 0.2266 $\pm$ 0.0046
& 0.5819 $\pm$ 0.0095
& 0.1404 $\pm$ 0.0008 \\

Transformer 
& 0.2625 $\pm$ 0.0404
& 0.3517 $\pm$ 0.0422
& 0.8586 $\pm$ 0.0498
& 0.2333 $\pm$ 0.0240 \\

ANODE 
& 0.0171 $\pm$ 0.0085
& 0.0236 $\pm$ 0.0114
& 0.0858 $\pm$ 0.0484
& 0.0161 $\pm$ 0.0083 \\
\toprule

\end{tabular}
\end{table*}

Specifically, for the DES under consideration, three distinct velocity directions are observed at a singularity. By Theorem \ref{co}, an augmented dimension of $k \ge 2$ is required for trajectory separation. As a result, a two-dimensional augmented space was employed for this experiment. 

Fig.~\ref{fig.des1} illustrates significant differences in the predictive performance of various models on the two state variables ($x_3$ and $x_4$). Ideally, scatter points along with $y=x$, indicating high agreement between predictions and ground truth. Among the comparative frameworks, LA-NODE exhibits the most compact distribution, with points tightly aligned along the diagonal, indicating high prediction accuracy. In contrast, although ANODE shows a more dispersed scatter distribution, its points remain close to the reference diagonal, indicating relatively high modeling capability. However, the Transformer and CSODE frameworks exhibit more pronounced deviations from the diagonal, suggesting larger prediction errors. In particular, during the later stages of the time series, both models exhibit noticeable performance degradation and are unable to continuously track the system's evolution. 

\begin{figure}
    \centering
    \includegraphics[width=0.8\textwidth]{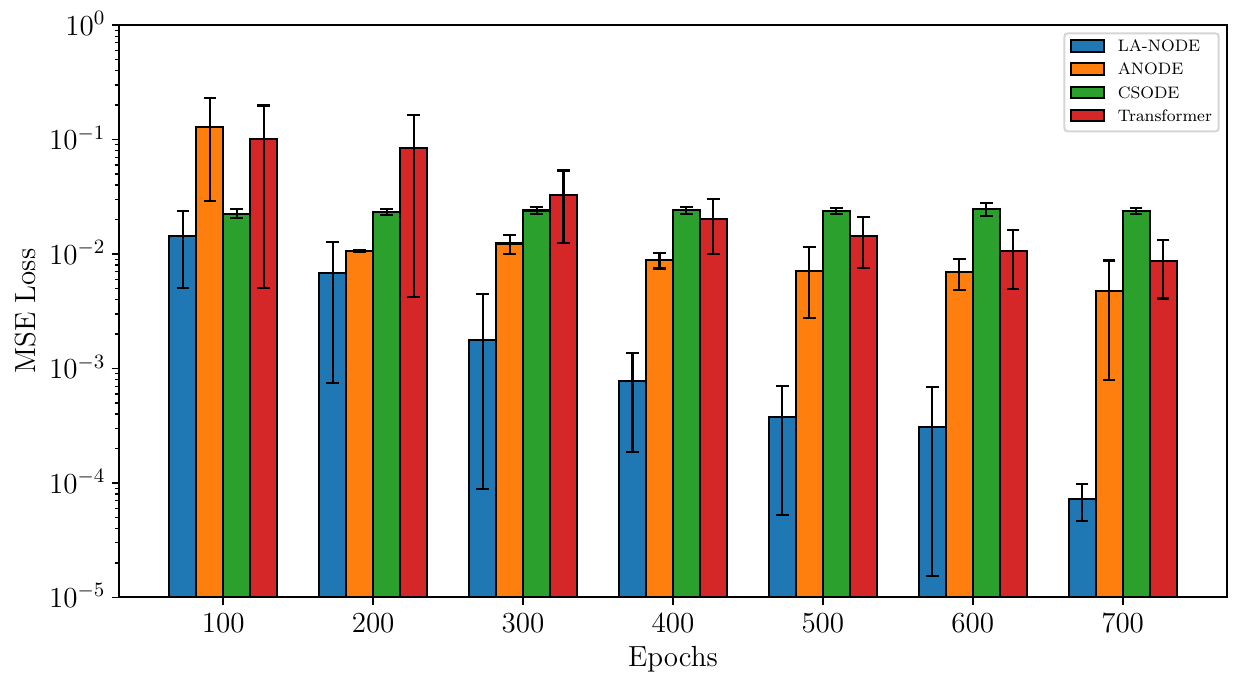}
    \caption{Comparison of MSE Loss for the testing set across four frameworks for DES.}
    \label{fig.des2}
\end{figure}

Fig.~\ref{fig.des2} illustrates the evolution of the test MSE for different frameworks. Except for CSODE, all frameworks show a decreasing trend. Among them, LA-NODE consistently achieves the lowest error, rapidly decreasing from 0.01 to below 0.0001. In contrast, ANODE starts with a relatively large error, which gradually decreases and eventually stabilizes around 0.0003. The Test MSE of CSODE remains around 0.01 with slight fluctuations, suggesting a clear performance bottleneck. The Transformer reduces error gradually but remains within [0.01,0.001], indicating limited precision.

As shown in Table~\ref{tab.desex}, four metrics are adopted, including the MAE, RMSE, Maximum Error (Max Error), and the Standard Deviation of Error (Std Error).  Among them, Max Error is used to measure the worst-case prediction deviation, while Std Error reflects the dispersion of errors and is employed to assess the framework’s stability. The experimental results indicate that LA-NODE achieves the best performance across all evaluation metrics. Compared with the second-best framework ANODE, LA-NODE reduces MAE and RMSE by approximately 56.1\% and 58.9\%, respectively. In addition, the reductions in Max Error and Std Error reach about 68.5\% and 62.1\%, respectively. Meanwhile, the standard deviations of all metrics for LA-NODE are consistently smaller than those of the competing methods except for CSODE, further indicating its superior stability and consistency. 

In summary, the results consistently demonstrate that LA-NODE achieves superior performance in terms of accuracy, stability, and generalization, making it more effective for learning DES dynamics compared with existing approaches.

\section{Sensitivity Analysis of the Augmented Dimension}\label{sec6}
As shown above, learning the dynamics of the DES requires an augmented dimension of LA-NODEs satisfying $k \ge 2$. To evaluate the significance of this theoretical constraint, this section presents a sensitivity analysis on the model performance, examining the value of $k$ from 1 to 5.

\begin{figure}
    \centering
    \includegraphics[width=0.8\textwidth]{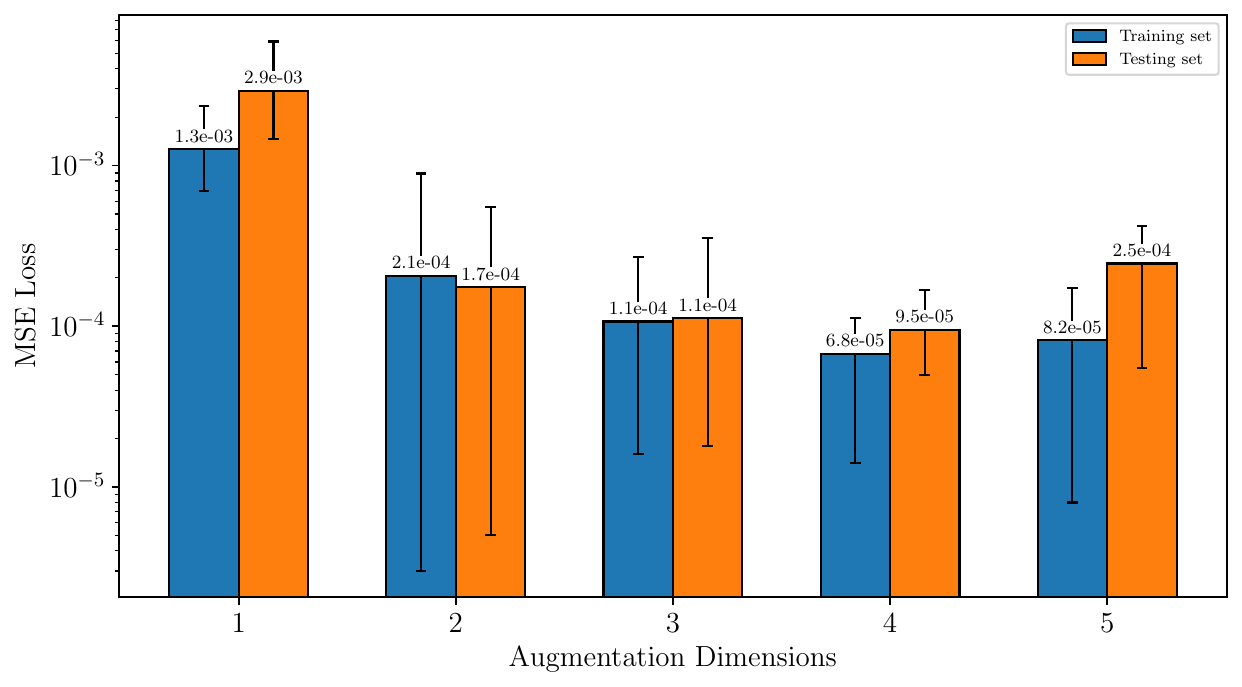}
    \caption{Comparison of MSE Loss (on logarithmic scale) for training set and testing set across different augmentation dimensions.}
    \label{fig.db}
\end{figure}

Fig.~\ref{fig.db} illustrates the learning performance of the framework under different augmentation dimensions. The error generally decreases as $k$ increases, except for $k=4 \to k=5$. However, this result only reflects the absolute error level and does not capture the actual contribution of each augmentation stage to performance improvement. To further quantitatively analyze the impact of augmentation dimensions on model performance, a contribution rate metric based on global error reduction is introduced. 
Specifically, let $E_k$ denote the error under augmentation dimension $k$. The total error reduction is defined as
\begin{equation}
\Delta E_{\mathrm{total}}=E_{\max}-E_{\min},
\end{equation}
where $E_{\max}$ and $E_{\min}$ are the maximum and minimum errors among all augmentation dimensions, respectively. The contribution ratio of each augmentation stage is defined as
\begin{equation}
C_k=
\frac{E_k-E_{k+1}}
{\Delta E_{\mathrm{total}}}
\times100\%.
\end{equation}

For training error, the transition from $k=1 \to k =2$ contributes 88.5\% of the total reduction, while $k=2 \to k =3$ and $k=3 \to k =4$ contribute only 8.1\% and 3.4\%, respectively. For test error, this effect is more pronounced: the $k=1 \to k =2$ stage contributes 97.4\%, while subsequent stages contribute only 2.1\% and 0.5\%. In addition, when the augmentation dimension increases to 5, the test error increases, showing a negative contribution.


To further investigate the influence of the augmentation dimension, the training time is also evaluated under different augmentation dimensions. Table~\ref{tab:aug_time} shows the average total training time over five independent runs for 700 epochs. It can be observed that the training time increases with the augmentation dimension, since a larger augmented space introduces additional model parameters and optimization complexity.

\begin{table}[b]
\centering
\caption{Average total training time (sec) over 5 runs for 700 epochs, varying $k$.}
\label{tab:aug_time}
\begin{tabular}{c c c c c c}
\toprule
$k$ & 1&2 & 3 & 4 & 5 \\
\midrule
Time (sec) &637.2& 638.2 & 648.9 & 684.0 & 740.2 \\
\bottomrule
\end{tabular}
\end{table}

\begin{figure}
    \centering
    \includegraphics[width=0.8\textwidth]{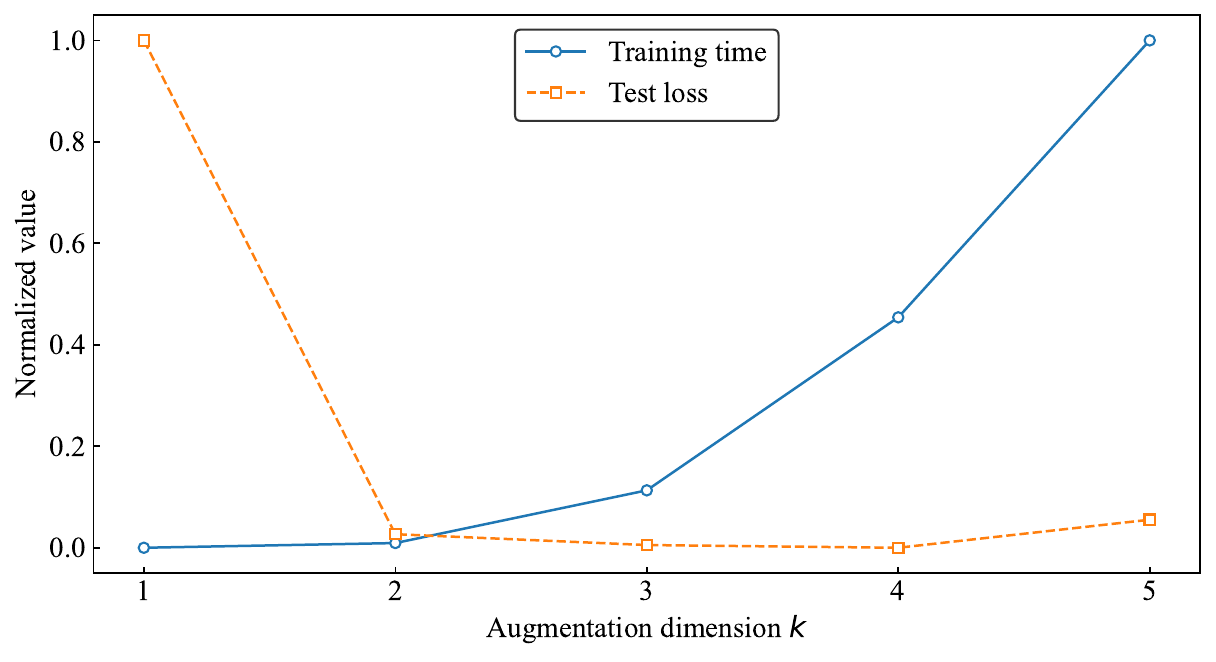}
    \caption{Trade-off analysis between computational cost and test loss under different augmentation dimensions.}
    \label{fig.tradeoff}
\end{figure}

Since training time and test loss have different scales, min-max normalization is applied to both quantities:
\begin{equation}
\hat{T}_k=\frac{T_k-T_{\min}}{T_{\max}-T_{\min}},\qquad
\hat{L}_k=\frac{L_k-L_{\min}}{L_{\max}-L_{\min}}, 
\end{equation}
where $\hat{T}_k$ and $\hat{L}_k$ denote the normalized computational cost and test loss, respectively. The corresponding trade-off result is presented in Fig.~\ref{fig.tradeoff}. It shows that the major performance improvement is achieved when increasing the augmentation dimension from $k=1$ to $k=2$, whereas further increasing $k$ only provides a marginal reduction in test loss while introducing additional computational cost. Therefore, $k=2$ achieves a favorable balance between representation capability and computational efficiency and is adopted in this setting.

It is also noteworthy that increasing the augmentation dimension does not always lead to a monotonic improvement in test loss. In particular, the test loss slightly increases when $k$ increases from $4$ to $5$. This phenomenon is mainly attributed to the fact that once the augmentation dimension satisfies the required velocity ambiguity rank $k\geq r(y^\star)$, additional latent dimensions may not provide further useful information for representing the reduced-order dynamics. For the DES, the maximum velocity ambiguity rank is
$\max_{y^\star}r(y^\star)=2$.

Therefore, augmentation dimensions larger than this requirement may introduce redundant latent variables and increase the optimization complexity, resulting in minor performance fluctuations. This also indicates that selecting the minimum feasible augmentation dimension is expected to provide a better balance between representation capability and computational efficiency.


\section{Conclusion}\label{sec7}
This paper proposes the Latent-Augmented Neural Ordinary Differential Equations (LA-NODEs) framework, which addresses an important limitation of standard ControlSynth NODEs (CSODEs) by augmenting the latent state dimension to enable the learning of reduced-order dynamical systems with trajectory singularities. A key theoretical contribution was the establishment of an explicit criterion for determining the minimum required augmentation dimension, which ensured a balance between model expressivity and computational tractability. Experiments on multiple industrial systems further demonstrate that LA-NODEs consistently outperforms state-of-the-art baselines in accurately capturing complex reduced-order dynamics.

When applying LA-NODE to a new reduced-order system, the augmentation dimension should be selected per the maximum velocity ambiguity rank among singularities, following the criterion in Theorem~\ref{co}. In practice, the minimum feasible augmentation dimension satisfying Theorem~\ref{co} is required, as an excessively large augmentation dimension may introduce unnecessary computational complexity and does not necessarily lead to further improvement in modeling performance.

Despite these advancements, several challenges remain and require further investigation. The effectiveness of LA-NODEs relies on the existence of a finite-dimensional latent space that can resolve the velocity ambiguity caused by state projection. For systems with severe information loss or highly complex hidden dynamics, state augmentation alone may be insufficient to construct an accurate representation of the underlying dynamics. In this study, the effectiveness of the LA-NODEs framework was primarily validated on low-dimensional manifolds; its scalability and generalization to high-dimensional state spaces remained to be fully explored. Furthermore, while the model exhibited robustness in the tested scenarios, its stability under stochastic perturbations or out-of-distribution disturbances required more rigorous quantification. 

Future research will focus on extending the framework to high-dimensional complex systems through structural optimization and synergistic integration with advanced control-theoretic methods. Efforts can be directed toward enhancing the model’s resilience under extreme disturbances and optimizing its computational efficiency for real-time control.



\bibliographystyle{elsarticle-harv}
\bibliography{cas-refs}

\end{document}